\newcommand\todoin[2][]{\todo[inline, caption={2do}, #1]{
\begin{minipage}{\textwidth-4pt}#2\end{minipage}}}
\renewcommand{\L}{\mathcal{L}}
\theoremstyle{plain}
\newtheorem{theo}{Theorem}[section]
\newtheorem{prop}[theo]{Proposition}
\theoremstyle{definition}
\newtheorem{defi}[theo]{Definition}
\newtheorem{ex}[theo]{Example}
\theoremstyle{remark}
\newtheorem{remark}{Remark}
\newcommand{\entails}{\mid\!\sim}
\newcommand{\dotminus}{\mathbin{\text{\@dotminus}}}
\newcommand{\@dotminus}{%
  \ooalign{\hidewidth\raise1ex\hbox{.}\hidewidth\cr$\m@th-$\cr}%
}
\newenvironment{exs}[1]{\smallskip \noindent{\bf Example~\ref{#1}   (continued):}\em}{}
\date{} 
\title{Belief Revision and Incongruity: is it a joke?\thanks{A special paper on/in humor/honor for/of Philippe Besnard. \\
This is an Accepted Manuscript of an article published by Taylor \& Francis in Journal of Applied Non-Classical Logics on August 2023, available at: https://doi.org/ 10.1080/11663081.2023.2244379.}}
\author{Florence Dupin de Saint-Cyr \and Henri Prade\thanks{Contact: \texttt{florence.bannay@irit.fr} and \texttt{henri.prade@irit.fr}}}
\date{IRIT-CNRS, Université de Toulouse,\\
118, route de Narbonne, 31062 Toulouse Cedex 9, France\\
August 2023\\
}
\begin{document}
\maketitle

\bigskip


\bigskip

 \begin{abstract}
Incongruity often makes people laugh. You have to be smart to say stupid things. It requires to be even smarter for understanding them. This paper is a shameless attempt to formalize this intelligent behavior in the case of an agent listening to a joke. All this is a matter of revision of beliefs, surprise and violation of norms.   
 
\textbf{Keywords:} Humor; belief revision; surprise; incongruity
\end{abstract}

  \quad\quad\quad\quad\quad\quad\quad\quad   \quad\quad\quad\quad\quad\quad\quad\quad  \quad\quad\quad\quad\quad
  \quad\quad\quad\quad
  ``Incongruity is never superfluous''
  
        \quad\quad\quad\quad\quad\quad\quad\quad\quad\quad\quad\quad\quad\quad\quad\quad\cite{TheaArbee} \emph{Is the superfluous a (new) modality?}\footnote{Thea Arbee aka Léa Sombé (``Les A sont B'' / ``The A are B") is the pen name of a group of researchers active in the early 1990's to which Philippe took part. It happened that Léa Sombé embodied herself during brief appearances to celebrate  members of the group. Léa Sombé sends her best kisses to Philippe. }



\section{Introduction}
Even if much has been written about ingredients that trigger laughter, researchers are still far from having completely understood their interplay in the cognitive process that leads a listener to guffaw at a pun or a joke. They are even farther from a detailed analysis and modeling of the mechanisms that are at work in this process.  

However, in  recent articles \cite{DuPr2020,DuPr2022}   took a first step in this direction by laying bare that a belief revision mechanism was solicited in the reception of a narrative joke. Namely the punchline, which triggers a revision, is both \emph{surprising} and \emph{explains perfectly} what was reported in the beginning of the joke. A similar idea has been more informally proposed in \cite{Ritchie2002}. It is quite clear that this is insufficient for characterizing a narrative joke. One may experience similar surprise and revelation effects when reading a detective story or when following the path of a scientific discovery, while there is usually nothing to laugh about.

This latter observation suggests that there are other ingredients in a joke. Among possible candidates that may play a role, incongruity  has been often identified. 
Incongruity has a rather broad meaning, since it may cover anything contrary to custom or good manners. This ranges from unexpected behavior to absurd or crazy reasoning to violations of taboos. This is a rather large scope of possibilities which have in common the non-respect of some (social) norms. In this paper we look at how to incorporate this ingredient into our model after rediscussing the defeasible setting in which it takes place.

The rest of the article  is structured as follows. Section \ref{Ing} provides an overview of the main ingredients of fun, as discussed by philosophers and psychologists, emphasizing the role of incongruity.  Section \ref{formalisation} restates, details and exemplifies the
authors'model based on belief revision and surprise, using nonmonotonic reasoning methods, as recalled in Section \ref{bg}.
Section \ref{Incong}  discusses how to add incongruity by taking into account norms. The conclusion 
points out future lines of research for a better coverage of the multiple aspects involved in humor.


\section{Incongruity among other ingredients of fun}\label{Ing}
 In the philosophical and psychological literature on humor, three ideas have often been put forward to trigger laughter: surprise, incongruity and superiority (see \cite{Keith-Spiegel1972} for an historical account).

\begin{itemize}
\item \emph{Surprise}: Surprise has been considered as a necessary condition for laughter by many authors and for a long time. \cite{Descartes1649} in \emph{Les Passions de l'Âme} (Passions of the Soul) (sections CXXIV to CXXVI)   analyzes the physiological aspect of laughter and its causes by underlining that it results from a surprise (related to ``Admiration") occurring in a context of (moderate) ``Joye". 
In a psychological study, \cite{Hollingworth1911}  shows that when novelty or surprise is eliminated then the response to a humorous stimulus is diminished. The importance of surprise in the analysis of humor is acknowledged by Marius  \cite{Latour1956} who dedicated the fourth essay of his 1956 book on laughter to ``la surprise source du rire'' (surprise as a source of laughter).

\item \emph{Incongruity}: Incongruity is also advocated as a necessary ingredient of humor, e.g., \cite{Ritchie1999}. It is often associated to surprise. 
 \cite{Beattie1776}  notes very early that humor can be based on ideas/situations that are  divergent from normal customs, or based on inconsistent and inappropriate situations. 
 \cite{Kant1790}, as  \cite{Schopenhauer1819}, speaks about the sudden incongruity of the perception of a real object compared to what had been imagined/expected. 
In his  book  \emph{Laughter: An Essay on the Meaning of the Comic},  \cite{Bergson1900}, advocates that the source of laughter involve an over-rigid response, that would, to use Bergson's terms, ``superimpose the mechanical on the living''; this may be viewed as a particular case of incongruity.   \cite{Delage1919} sees incongruity as a disharmony which is an essential condition of the comic: ``Il doit exister entre l'effet produit et sa cause une désharmonie quantitative ou qualitative, d'où résulte une impression de surprise, un effet d'imprévu'' (There must exist between the effect produced and its cause a quantitative or qualitative disharmony, from which results an impression of surprise, an effect of the unexpected).
 Other references and more details on the theory of incongruity can be found in the collection of papers edited by   \cite{ChapmanFoot1976}.
\item \emph{Superiority} : 
According to F. R. \cite{Fleet1890}, referring to the \emph{Leviathan} \cite{Hobbes}, ``Laughter is a sudden glory arising from some sudden conception of some eminency in ourselves by comparison with the inferiority of others''. Thus what is laughable would be the highlighting of an imperfection (absent in the listener) by the one who wants to make people laugh, arousing a sense of superiority. The authors of this theory see laughter as a victory over other people (or circumstances). One feels superior by laughing because one is less stupid, less ugly, less unlucky, less weak than the target of mockery.
One may consider that the notion of superiority underlies \cite{Bergson1900}'s famous book on laughter, notably through the theatrical aspect of his examples (where the audience is laughing at the characters embodied by the actors). In the same vein, Marcel \cite{Pagnol47}, a well-known French play-writer  has written a short essay on this topic where he emphasized that ``Laughter is a song of triumph (chant de triomphe): it is the expression of a momentary superiority, but suddenly discovered, of the laugher on the mocked. There are two kinds of laughter'', one considered by Pagnol as positive (``healthy, invigorating, relaxing'') where one ``feels superior to the whole world or to oneself'', the other negative which is ``hard and almost sad'', where one laughs because the other is inferior to oneself.
\end{itemize}

These three elements have also been mentioned by other authors, see \cite{AtRa1991,Ritchie2018}, sometimes together with more ingredients such as ``double entendre'', ``lack of compassion'', ``situation matching'', etc. For instance,  \cite{Maier1932} summarises his view as: ``The thought-configurations which makes for a humorous experience must (1) be unprepared for; (2) appear suddenly and bring with it a change in the meaning of its elements; (3)  be made up of elements which are experienced entirely objectively [...] ; (4) contain as its elements the facts
appearing in the story, and these facts must be harmonized, explained and unified; and (5)  have the characteristics of the ridiculous in that its harmony and logic apply only to its own elements'' [pp.73-74].

In this paper, we would like to continue our exploration about joke analysis by taking into account incongruity as a new part of the mechanism introduced in \cite{DuPr2020} where surprises were handled in a logical setting  involving nonmonotonic reasoning. We leave the superiority ingredient for further research since it involves a richer framework able to represent the beliefs of several agents (the teller, the listener and the other people that are mocked).
We note that the notion of ``incongruity resolution’’ plays an important role in psychology literature, see  \cite{ChapmanFoot1976,Bariaud1983,Ritchie1999}: this expression means finding a logic or at least find the initial thing less incongruous than it seemed, i.e., ``make it acceptable'' (inasmuch the matter is not perceived as serious or dangerous, in particular  with a young audience). The French psychologist Françoise Bariaud’s essay about the genesis of humor in children (\cite{Bariaud1983}) claims  that the notion of cognitive reference frame (which the child is in the process of constructing) is used both to detect the incongruity and to resolve it (calling upon a logical sense). She also notes that it is not necessary to put taboos into play, as we can exemplify by the following riddle: \emph{How to put four elephants in a Citro{\"e}n 2CV ?} Answer: \emph{two in the front and two in the back.} 

 It seems to us that there is often a latent mishmash  between inconsistency  and incongruity. We believe that incongruity is an important ingredient which by its comic nature facilitates laughter, and thus adds to the surprise of the contrast (or even inconsistency) of the punchline with the expectation created by the context. Incongruity\footnote{Incongruity is lacking congruity which comes directly from late Latin \emph{congruus} "suitable, agreeing," \cite{incongruous}} usually involves the \emph{violation of norms} which can provoke a comic effect (insofar as the violation is not too shocking for the receiver). Hence incongruity also relates to inconsistency with respect to particular pieces of knowledge considered as "norms" (such are rules of good behavior) which are disregarded in the joke. Incongruity could be viewed as a spice, chosen in a more or less appropriate way, which is added in greater or lesser quantity to a dish according to the guests to whom it is served. Thus, for certain audiences, certain jokes will be too salty, or too bland.

We now recall several AI notions that are at work in our modeling of joke understanding, including our previous proposal, see \cite{DuPr2020}.

\section{Background on belief revision and nonmonotonic reasoning}\label{bg}

 With humor, there is a manipulation of what listeners are meant to believe in order to make them fall into a trap \cite{Raccah2015}. In this section, we simply propose to translate this idea in terms of belief revision as also suggested by  \cite{Ritchie2002} who introduced a logical language to formalize the interpretation of jokes. Ritchie started from the principle that ``the punchline forces to reinterpret the context previously set up''. His language includes an accommodation operator (used to integrate non-conflicting knowledge) and a belief change operator to take into account the punchline, this operator is inspired by revision in the sense of  \cite{Gardenfors1988}. 

Moreover the comprehension of a joke by a listener involves  modeling her or his commonsense knowledge with which she or he interprets the story told. This is a matter of nonmonotonic reasoning. 
This background section first provides refreshers on belief revision, and then on nonmonotonic reasoning, before interfacing the two settings in the framework of possibility theory. A brief review on the modeling of surprises is also added.

\subsection{Belief revision refresher}\label{revision}

We temporarily abandon the theme of laughter to recall the notion of belief revision which is central in our analysis of the understanding of jokes.
%
\cite{AGM1985} (AGM) have introduced the concept of belief revision.
Belief revision is the process of determining what remains of old beliefs after the arrival of new information. The beliefs are represented by expressions in a formal language. 
\emph{Revision} consists in adding information while preserving consistency. 
This operation is necessary since inconsistency leads to a useless state of belief. The main contribution of the AGM article is the definition of a set of postulates that must be satisfied by any \emph{rational} revision operator. As emphasized by   ~\cite{Gardenfors1988} - see also \cite{Sombe1994} - these postulates are based on three principles:
 \begin{itemize}
 \item 
 a principle of consistency (the result must be consistent),
 \item 
a principle of minimal change (the initial beliefs must be modified as little as possible),
 \item 
a principle of priority to the new information (the new information must hold after the revision).
 \end{itemize}

We consider a propositional language $\L$, where propositions are denoted by Greek symbols in lower case. The symbols $\bot$, $\vee$, $\wedge$, $\neg$, $\to$, $\equiv$, $\models$ denote respectively the contradiction,   the logical connectors ``or'', ``and'', ``not'', material implication, logical equivalence, satisfiability. Let $\Omega$ denote the set of interpretations induced by $\L$, we will often use $\omega$ for naming a particular interpretation in $\Omega$, each interpretation will be described by the list of literals satisfied by it , e.g. $\omega=a~\neg b~c$ is an interpretation that associates  the truth value True to $a$ and $c$ and False to $b$. 
$Mod(A)\subseteq \Omega$ is the set of interpretations satisfying the set of propositional formula $A\subseteq \L$ ($Mod(A)=\{\omega\in \Omega| \omega \models \bigwedge_{\varphi\in A} \varphi\}$), the same notation is used to represent the set of models of a formula $Mod(\varphi)=\{\omega\in \Omega|\omega \models \varphi\}$. 
In what follows, we use the symbol $\circ$ for representing an operator of \emph{belief revision} in the sense of \cite{KaMe1991}. 

We recall below the set of postulates stated by  \cite{KaMe1991} which are equivalent to those of AGM, but allow more easily to link a revision operator to a distance relation between interpretations.
More formally, a KM revision operator associates to a formula\footnote{In the initial approach of AGM, a deductively closed set $K$ of formulas was considered to represent the initial knowledge, KM showed that one could state the postulates on a formula $\kappa$ whose set of consequences forms $K$.} $\kappa$ and to a formula $\varphi$ (representing the new information), another formula denoted by $\kappa \circ \varphi$. To be considered as ``rational'' the operator $\circ$ must satisfy the KM postulates:

\textbf{(KM1):\quad} $\kappa \circ \varphi \models\varphi$

\textbf{(KM2):\quad } If $\kappa \wedge \varphi$ satisfiable, then $\kappa \circ \varphi\equiv \kappa \wedge \varphi$

\textbf{(KM3):\quad } If $\varphi$ is satisfiable, then $\kappa \circ \varphi$ is also satisfiable

\textbf{(KM4):\quad } If $\kappa_1\equiv \kappa_2$ and $\varphi_1\equiv \varphi_2$ then $\kappa_1\circ \varphi_1\equiv \kappa_2\circ \varphi_2$

\textbf{(KM5):\quad } $(\kappa\circ \varphi)\wedge \psi \models \kappa\circ(\varphi\wedge \psi)$

\textbf{(KM6):\quad } If $(\kappa\circ \varphi)\wedge \psi$ is satisfiable then $\kappa\circ (\varphi \wedge\psi) \models (\kappa \circ \varphi)\wedge \psi$


The first four postulates are the basic postulates of revision.
The last two agree with the principle of minimization of change, according to \cite{KaMe1991}; similar postulates had been introduced before as a basis for rational choice functions, see \cite{Sen1971}.
More precisely, \textbf{(KM1)} imposes that the new information must hold after the revision. \textbf{(KM2)} dictates that when the new information is not inconsistent with the original beliefs then the revision is a simple expansion. \textbf{(KM3)} expresses that if the new information is consistent then the revised set of beliefs is consistent. \textbf{(KM4)} expresses that a revision operator is syntax independent. 
According to \cite{KaMe1991}, ``rule \textbf{(KM5)} says that our notion of closeness is well-behaved in the sense that if we pick any interpretation $\omega$ which is closest to $Mod(\kappa)$ in a certain set, namely $Mod(\varphi)$, and $\omega$ also belongs to a smaller set, $Mod(\varphi\wedge \psi)$, then $\omega$ must also be closest to $Mod(\kappa)$ within the smaller set $Mod(\varphi\wedge\psi)$.''
Moreover \cite{KaMe1991} explained that ``a violation of rule \textbf{(KM6)} would imply that an interpretation $\omega$ may be closer to the knowledge base
than $\omega'$ within a certain set, while $\omega'$ is closer than $\omega$ within some other set''\footnote{\cite{KaMe1991} detailed this argument in the following way: ``consider a model $\omega$ of $\kappa \circ (\varphi \wedge \psi)$, that is, a model of $\varphi\wedge \psi$ that is closest to $Mod(\kappa)$ within the set $Mod(\varphi\wedge \psi)$. Suppose $\omega$ is not a model of $(\kappa \circ \varphi)\wedge \psi$. The precondition of \textbf{(KM6)} says that there is some interpretation $\omega'$ that is a model of $\kappa \circ \varphi$ and also of $\psi$. That is, $\omega'$ is a model of $\varphi \wedge \psi$ that is closest to $Mod(\kappa)$ within the set $Mod(\varphi)$. But then $\omega'$ is closer to $Mod(\kappa)$ within the set $Mod(\varphi)$ than $\omega$, while $\omega$ is closer to $Mod(\kappa)$ than $\omega'$ within the set $Mod(\varphi\wedge \psi)$.''}.


Katsuno and Mendelzon stated the following representation theorem for expressing a revision in terms of proximity of models:
\begin{theo}[\cite{KaMe1991}]\label{theoKM}
$\circ$ satisfies the postulates (KM1) - (KM6) if and only if there exists a function faithfully assigning to each epistemic state $\kappa$ a total pre-order $\preceq_\kappa$ such that:
$$Mod (\kappa \circ \varphi) = \min(Mod (\varphi), \preceq_\kappa)$$
\end{theo}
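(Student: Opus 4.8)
The plan is to prove both directions of the equivalence in the Katsuno--Mendelzon representation theorem, following the now-classical argument.

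\medskip

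\noindent\textbf{Soundness direction (faithful assignment $\Rightarrow$ postulates).}
First I would assume we are given a faithful assignment $\kappa\mapsto\preceq_\kappa$ and define $\circ$ by $Mod(\kappa\circ\varphi)=\min(Mod(\varphi),\preceq_\kappa)$ (picking any formula with that model set; faithfulness refers to the usual conditions: $\omega\preceq_\kappa\omega'$ for all $\omega'$ iff $\omega\models\kappa$ when $Mod(\kappa)\neq\emptyset$, and $\kappa_1\equiv\kappa_2$ implies $\preceq_{\kappa_1}=\preceq_{\kappa_2}$). Then I would check the six postulates one at a time. (KM1) is immediate since $\min(Mod(\varphi),\preceq_\kappa)\subseteq Mod(\varphi)$. (KM2) follows because when $Mod(\kappa)\cap Mod(\varphi)\neq\emptyset$, the faithfulness condition forces the $\preceq_\kappa$-minimal elements of $Mod(\varphi)$ to be exactly $Mod(\kappa)\cap Mod(\varphi)$. (KM3) uses that $\preceq_\kappa$ is a total pre-order on the finite set $\Omega$, so any nonempty $Mod(\varphi)$ has a nonempty minimum. (KM4) is the syntax-independence built into the faithful assignment plus the fact that $\min$ depends only on $Mod(\varphi)$. (KM5) and (KM6) together are equivalent to the statement
$$\min(Mod(\varphi),\preceq_\kappa)\cap Mod(\psi)\neq\emptyset \ \Longrightarrow\ \min(Mod(\varphi\wedge\psi),\preceq_\kappa)=\min(Mod(\varphi),\preceq_\kappa)\cap Mod(\psi),$$
which is a general property of minimization of a total pre-order over nested subsets; I would prove the two inclusions separately, using transitivity and totality of $\preceq_\kappa$.

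\medskip

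\noindent\textbf{Completeness direction (postulates $\Rightarrow$ faithful assignment).}
Conversely, assume $\circ$ satisfies (KM1)--(KM6). For each $\kappa$ I would define a relation $\preceq_\kappa$ on $\Omega$ by setting, for interpretations $\omega,\omega'$, $\omega\preceq_\kappa\omega'$ iff $\omega\in Mod(\kappa\circ\mathit{form}(\{\omega,\omega'\}))$, where $\mathit{form}(\{\omega,\omega'\})$ is a formula whose models are exactly $\{\omega,\omega'\}$ (and declaring every model of $\kappa$ strictly below every non-model, which is forced once $Mod(\kappa)\neq\emptyset$ by (KM2)). The work is to show: (i) $\preceq_\kappa$ is total --- immediate from (KM1) and (KM3), since $\kappa\circ\mathit{form}(\{\omega,\omega'\})$ is satisfiable and its models lie in $\{\omega,\omega'\}$; (ii) $\preceq_\kappa$ is transitive --- this is the delicate part and is where (KM5) and (KM6) are used, by comparing $\kappa\circ\mathit{form}(\{\omega_1,\omega_2,\omega_3\})$ with its restrictions to two-element subsets; (iii) the assignment is faithful --- from (KM2) and (KM4); and (iv) $Mod(\kappa\circ\varphi)=\min(Mod(\varphi),\preceq_\kappa)$ for every $\varphi$. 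For (iv), the inclusion $Mod(\kappa\circ\varphi)\subseteq\min(Mod(\varphi),\preceq_\kappa)$ comes from (KM1) together with (KM5) (applied with $\psi=\mathit{form}(\{\omega,\omega'\})$), and the reverse inclusion from (KM3) plus (KM6).

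\medskip

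\noindent\textbf{Main obstacle.}
The routine parts are (KM1)--(KM4) and the totality/faithfulness claims; the real work --- in both directions --- is the interplay between (KM5)--(KM6) and transitivity of the pre-order. Concretely, the hardest step is proving transitivity of $\preceq_\kappa$ in the completeness direction: given $\omega_1\preceq_\kappa\omega_2$ and $\omega_2\preceq_\kappa\omega_3$, one must extract $\omega_1\preceq_\kappa\omega_3$ by cleverly instantiating (KM5)/(KM6) on three-element model sets and chasing which interpretation is selected, handling the case distinctions on which of the $\omega_i$ are models of $\kappa$. I would also need the standing assumption that $\L$ is finitely generated (finitely many propositional variables, hence $\Omega$ finite) so that every subset of $\Omega$ is $Mod$ of some formula and every nonempty subset has $\preceq_\kappa$-minimal elements; without finiteness the statement as phrased needs a smoothness/limit assumption on $\preceq_\kappa$, which I would flag but not belabor since the paper works in the propositional finite setting.
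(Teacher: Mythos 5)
This statement is not proved in the paper at all: Theorem~\ref{theoKM} is imported as background, with the proof residing in the cited work of Katsuno and Mendelzon (1991). So there is no ``paper proof'' to compare against; the relevant benchmark is the original KM argument, and your outline follows exactly that classical route: soundness by checking (KM1)--(KM6) against minimization under a faithful total pre-order, with (KM5)/(KM6) repackaged as the condition that a nonempty intersection $\min(Mod(\varphi),\preceq_\kappa)\cap Mod(\psi)$ forces $\min(Mod(\varphi\wedge\psi),\preceq_\kappa)$ to equal it; completeness by defining $\omega\preceq_\kappa\omega'$ through revision by a formula whose models are $\{\omega,\omega'\}$, then establishing totality, transitivity, faithfulness, and the representation identity. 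Your reformulation of faithfulness and your reading of (KM5)/(KM6) in model-theoretic terms are both correct, and the finiteness caveat you raise is apposite, since the paper works with a finitely generated propositional language (so every subset of $\Omega$ is $Mod$ of some formula and minima exist).

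The only reservation is that what you have written is a plan rather than a proof: the two places where the theorem actually has content --- the transitivity of the constructed $\preceq_\kappa$ via (KM5)/(KM6) applied to three-element model sets, and the two inclusions of $Mod(\kappa\circ\varphi)=\min(Mod(\varphi),\preceq_\kappa)$ for arbitrary $\varphi$ --- are named and correctly located but not carried out, so the argument cannot be checked precisely where errors in such proofs typically hide. If you intend this as a full proof rather than a roadmap, those two verifications (including the case analysis on which of the $\omega_i$ satisfy $\kappa$, and the degenerate case of inconsistent $\kappa$, where the faithfulness conditions are vacuous but (KM3) still forces consistent outputs) need to be written out; as an account of the strategy behind the cited theorem, it is accurate.
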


The assignment is faithful when it associates to any formula $\kappa$ a pre-order $\preceq_\kappa$ which strictly privileges the interpretations satisfying $\kappa$ to the others\footnote{More formally, the assignment is faithful if for any formula $\kappa$, we have:
\begin{itemize}
\item[1)] If  $\omega\models \kappa \mbox{ and } \omega'\models \kappa \mbox{ then }\omega =_\kappa \omega'$.
\item[2)] If  $\omega\models \kappa \mbox{ and } \omega'\not\models \kappa \mbox{ then }\omega \prec_\kappa \omega'$.
\item[3)] If  $\kappa \equiv \kappa'$ then  $\preceq_\kappa = \preceq_{\kappa'}$.
\end{itemize}}.
We recall that a \emph{pre-order} is a reflexive and transitive relation; a pre-order $\preceq$ over a set $A$ is \emph{total} when for any pair $(a,b)$ of elements of $A$ either $a \preceq b$ or $b\preceq a$. 
Note that in the context of belief revision, the piece of information $\varphi$ is a new piece of knowledge about the world which is considered as static. When $\varphi$ is representing an evolution of the world and not an evolution of the knowledge about the world, then the operation is called \emph{update} \cite{Winslett1988,KaMe1991b}. 
Belief update has its own postulates and representation theorem, but this issue goes beyond the modeling needs of this paper,  since in jokes, the aim of the teller is to manipulates the beliefs of the listener.

As noticed above a revision operator $\circ$ is defined by a family of pre-orders $\preceq_K$, one pre-order for each initial formula $K$, however in the following we only require to dispose of \emph{one} initial formula characterizing the state of knowledge of the listener before hearing the joke. This is why we only need one pre-order associated to $K$, namely the listener can be represented by the pair $(K, \preceq_K)$. Still in the definitions of next subsection, we shall  use the notation $(K,\circ)$ and benefit of concise writing such as $K \circ \varphi$. As explained in the next subsection, when the knowledge base of the listener includes default rules, it gives birth to a pre-order on interpretations. In other words, a set of default rules is an implicit  way to encode $(K,\preceq_K)$.


\subsection{Nonmonotonic reasoning and possibility theory refresher}\label{nmr}
As recalled by the above theorem, the semantics of belief revision is a matter of pre-orders which can be also expressed in terms of sphere systems (\cite{Grove1988})  and epistemic entrenchment (\cite{Gardenfors1988}), two notions which have their exact counterpart in possibility theory (in the form of qualitative possibility distributions and necessity relations, see \cite{DuPr1991}).  A possibility distribution is qualitative in the sense that it only plays the role of an ordinal ranking function on the set of possible interpretations (unlike a probability distribution). 
Possibility distributions are used here to implement the  revision operator $\circ$ because what is required is simply an order of plausibility as in sphere systems and no numerical value is either required or available. However, the approach presented in this paper is valid for any type of belief revision operator satisfying the \cite{AGM1985} or \cite{KaMe1991} postulates.

\cite{Gardenfors1990} has pointed out that belief revision and nonmonotonic reasoning can be seen as the two sides of the same coin. Indeed, a pre-order is naturally induced by default rules which express generic knowledge. 
The interest of default rules is the capability of reasoning in presence of incomplete information. In such a setting non-monotonic reasoning takes place for handling exceptions. 
In our encoding of common sense knowledge, we use $\leadsto$ to represent a default rule: $\alpha\leadsto \beta$, with $\alpha,\beta\in \L$; which means that when $\alpha$ is true, it is more plausible that $\beta$ is true than false. 

A procedure, called System Z, that stratifies a set of default rules (according to their ability to tolerate each other) was first proposed by \cite{Pearl1990} (in agreement with an infinitesimal probability setting). This stratification provides ways to obtain pre-orders on interpretations as described in \cite{BCDLP1993}, among them is the so-called \emph{best-out ordering} that prefers interpretations where the priority level of the highest priority formula violated is as small as possible. Another well-known ordering on interpretations based on stratified sets of formulas is the so-called \emph{lexicographic ordering} that labels each interpretation with a tuple listing the number of violated formulas per strata and compare the obtained tuples by lexicographic order, see \cite{BCDLP1993}.

The best-out ordering on interpretations corresponds to the one obtained by \cite{BDP1992} in a possibility theory setting. Moreover in \cite{BDP1992} a stratification of the set of default rules is also obtained, which is the same as the one obtained by System Z (\cite{Pearl1990}); see \cite{BDP1997}. 
More precisely, a default rule is translated in the framework of possibility theory (\cite{DuPr1988}) by the constraint $N(\beta|\alpha)>0$.  
Because of the duality between a necessity measure $N$ and its associated possibility measure $\Pi$, i.e., $N(\alpha)=1-\Pi(\neg \alpha)$, the constraint $N(\beta|\alpha)>0$ is written in an equivalent way as $\Pi(\alpha\wedge \beta)> \Pi(\alpha\wedge \neg \beta)$ which expresses that, in the context $\alpha$, having $\beta$ true is the normal situation (since strictly more possible than $\beta$ false). This writing assumes the existence of a plausibility relation on the interpretations, which is represented by a possibility distribution $\pi$, i.e., a function from the set of interpretations $\Omega$ of $\L$ to a linearly ordered scale bounded by 0 and 1, where 0 represents the impossible and 1 the completely possible. 
Given a set $\Delta$ of $n$ default rules $\alpha_i\leadsto \beta_i$, encoded as a collection of constraints $\Pi(\alpha_i\wedge \beta_i)> \Pi(\alpha_i\wedge \neg \beta_i)$, a unique possibility distribution $\pi_\Delta$ over interpretations is obtained by selecting the least restrictive\footnote{In terms of letting the possibility degrees as large as possible with respect to the constraints.} possibility distribution satisfying all the constraints (where $\Pi(\varphi)=\max_{\omega\models \varphi} \pi_\Delta(\omega)$). 
This is how the set of defaults $\Delta$ is associated with a pair $(K,\preceq_K)$ such that $K=\bigcup_i \neg \alpha_i \vee \beta_i$ and $\preceq_K=\preceq_\Delta$ where $\preceq_\Delta=\{(\omega,\omega')|\pi_\Delta(\omega)\geq \pi_\Delta(\omega')\}$ which is faithful with respect to $K$ by construction (as required in the Theorem \ref{theoKM}). Note that the formulas in $K$ can be ranked by using their necessity levels $N(\neg \alpha_i\vee \beta_i)$ induced from $\pi_\Delta$ by $N(\varphi)=\min_{\omega \not\models \varphi} (1- \pi_\Delta(\omega))$.

Now given a set of default rules $\Delta$ representing the initial knowledge, the arrival of a new piece of information $\varphi$ is integrated by a revision process (hidden behind the notation ``$K$ $\circ$ $\varphi$'')  where $(K,\preceq_K)$ are defined as above, based on $\Delta$. Note that this revision is equivalent to the conditioning of $\pi_\Delta$ by $\alpha$ (i.e., $\pi_\Delta(\omega|\varphi)=\pi_\Delta(\omega)$ if $\omega \models \varphi$, and $0$ otherwise), yielding a revision operator (\cite{BDPW2002}) satisfying the revision postulates. This operator is similar to that of \cite{Lehmann1995}.

As noticed in subsection \ref{revision}, the listener who could be equated with a pair $(K, \circ)$ can be more simply identified with a pair $(K,\preceq_K)$, $K$ being given. Such an ordering $\preceq_K$ is naturally available when the knowledge is expressed by a set of formulas including default rules (and maybe strict rules). 
Understanding a joke being a matter of default reasoning, as explained in the introduction of Section \ref{bg}, it leads to use default rules to represent the knowledge of the listener.   
This way of encoding the agent's \emph{non-monotonic} knowledge does not affect the generality of our approach. It is simply a compact and explicit way of specifying the plausibility relations between interpretations (which according to \cite{Gardenfors1990} underlie any revision operator).

\subsection{From default reasoning to belief revision}\label{defaultrevision}
From now on, the listener's knowledge can also be expressed by a knowledge base $\Sigma$ made of two parts: $\Sigma=(P, \Delta)$ which is the pair of a \emph{consistent} set $P$ of propositional formulas and a \emph{consistent}\footnote{The consistency of a default base is equivalent to the consistency of the set of constraints of the form $\Pi(\alpha_i\wedge \beta_i)>\Pi(\alpha_i\wedge \neg \beta_i)$ (\cite{BDP1997}).} set $\Delta$ of default rules of the form $\alpha_i \leadsto \beta_i$, with $\alpha_i, \beta_i\in \L$.  
Indeed, as it is the case in classical logic, an inconsistent knowledge base leads to an impossibility of rational reasoning, thus here the impossibility of understanding the joke. In this paper, $\Sigma$ plays the role of both $K$ and $\circ$ since $\Delta$ allows to induce a pre-order on the interpretations. As seen in subsection \ref{nmr}, $\Delta$ and $(K,\circ)$ are the two sides of the same coin since $Mod(K \circ \varphi)=\min(Mod(\varphi),\preceq_\Delta)$. With the introduction of the set of strict rules $P$ it is necessary to use a revision under the integrity constraint $P$ in the sense of \cite{KaMe1991} defined by $K \circ^P \varphi=K\circ (P\cup\{\varphi\})$ which yields $Mod(K\circ^P \varphi)=\min(Mod(P\cup\{\varphi\}),\preceq_\Delta)$ and which amounts to be equivalent with $\min(Mod(\varphi),\preceq_\Sigma)$ where $\preceq_\Sigma$ is the pre-order obtained from $\preceq_\Delta$ restricted to the models of $P$. 
More precisely, from the set $\Sigma$, we can compute  a possibility distribution $\pi_\Sigma$ over the interpretations that satisfy the constraints induced by $\Delta$ and assigns 0 to each interpretation that does not satisfy $P$ (\cite{BDP1997}). 

The aim of the following example is to show how the pair $(K,\preceq_K)$ can be obtained from a knowledge base $\Sigma=(P, \Delta)$. This is done in terms of an apparently abstract example, whose data corresponds exactly to the situation of the joke Example \ref{femmechezmedecin}. 

\begin{ex}\label{ktcr} Let us consider the following knowledge base $\Sigma$ representing the knowledge of a listener, we are going to show how this listener can rank order the interpretations based on $\Sigma$.\\
$\Sigma=\begin{array}{|ll}\small
P & \neg k \to \neg c\\
\hline
&t \leadsto c\\ 
\Delta &c \leadsto \neg r\\ 
&\neg c \leadsto r\\
\end{array}$

In this example the language is based on four propositional variables: $k$, $t$, $c$, $r$ yielding 16 interpretations that we denote by $\omega_1$, $\ldots$, $\omega_{16}$ as follows:
\setlength{\arraycolsep}{0pt}
$$\begin{array}{cccccp{1cm}ccccc}
~\omega_1~=~ & k & t & c & r & &~\omega_9~=~ & \neg k & t & c & r\\ 
~\omega_2~=~ & k & t & c & \neg r & &~\omega_{10}~=~ & \neg k & t & c & \neg r\\ 
~\omega_3~=~ & k & t & \neg c & r & &~\omega_{11}~=~ & \neg k & t & \neg c & r\\ 
~\omega_4~=~ & k & t & \neg c & \neg r & &~\omega_{12}~=~ & \neg k & t & \neg c & \neg r\\ 
~\omega_5~=~ & k & \neg t & c & r & &~\omega_{13}~=~& \neg k & \neg t & c & r\\ 
~\omega_6~=~ & k & \neg t & c & \neg r & &~\omega_{14}~=~ & \neg k & \neg t & c & \neg r\\ 
~\omega_7~=~ & k & \neg t & \neg c & r & &~\omega_{15}~=~ & \neg k & \neg t & \neg c & r\\ 
~\omega_8~=~ & k & \neg t & \neg c & \neg r & &~\omega_{16}~=~ & \neg k & \neg t & \neg c & \neg r\\
\end{array}$$

From $\Sigma$, applying the mechanism recalled in Section \ref{nmr}, 
\begin{enumerate}
\item $\Delta$ is transformed into the three constraints:\\ 
$\Pi(t\wedge c)> \Pi(t\wedge \neg c)$\\  
$\Pi(c\wedge \neg r)> \Pi(c\wedge r\}$\\ 
$\Pi(\neg c\wedge r)> \Pi(\neg c\wedge \neg r)$.\\
They are equivalent to:\\ $\max(\pi_\Delta(\omega_1),\pi_\Delta(\omega_2),\pi_\Delta(\omega_9),\pi_\Delta(\omega_{10}))>\max(\pi_\Delta(\omega_3),\pi_\Delta(\omega_4),\pi_\Delta(\omega_{11}),\pi_\Delta(\omega_{12}))$\\
$\max(\pi_\Delta(\omega_2),\pi_\Delta(\omega_6),\pi_\Delta(\omega_{10}),\pi_\Delta(\omega_{14}))>\max(\pi_\Delta(\omega_1),\pi_\Delta(\omega_5),\pi_\Delta(\omega_{9}),\pi_\Delta(\omega_{13}))$\\
$\max(\pi_\Delta(\omega_3),\pi_\Delta(\omega_7),\pi_\Delta(\omega_{11}),\pi_\Delta(\omega_{15}))>\max(\pi_\Delta(\omega_4),\pi_\Delta(\omega_8),\pi_\Delta(\omega_{12}),\pi_\Delta(\omega_{16}))$
\item The least restrictive possibility distribution satisfying the constraints is such that $\pi_\Delta(\omega_2)=\pi_\Delta(\omega_6)=\pi_\Delta(\omega_7)=\pi_\Delta(\omega_{10}) = \pi_\Delta(\omega_{14})=\pi_\Delta(\omega_{15}) > \pi_\Delta(\omega_1)=\pi_\Delta(\omega_3)=\pi_\Delta(\omega_4)=\pi_\Delta(\omega_{5}) = \pi_\Delta(\omega_{8})=\pi_\Delta(\omega_9)=\pi_\Delta(\omega_{11})=\pi_\Delta(\omega_{12})=\pi_\Delta(\omega_{13}) = \pi_\Delta(\omega_{16})$
\item We compute $\pi_\Sigma$ from $\pi_\Delta$ taking into account the constraint $\neg k\to \neg c$ of $P$, by making its counter-example at the lowest level:
$\pi_\Sigma(\omega_2)=\pi_\Sigma(\omega_6)=\pi_\Sigma(\omega_7)=\pi_\Sigma(\omega_{15}) > \pi_\Sigma(\omega_1)=\pi_\Sigma(\omega_3)=\pi_\Sigma(\omega_4)=\pi_\Sigma(\omega_{5}) = \pi_\Sigma(\omega_{8})=\pi_\Sigma(\omega_{11})=\pi_\Sigma(\omega_{12})= \pi_\Sigma(\omega_{16}) > \pi_\Sigma(\omega_9)=\pi_\Sigma(\omega_{10}) = \pi_\Sigma(\omega_{13}) = \pi_\Sigma(\omega_{14})=0$
\item This corresponds to the pair $(K,\preceq_K)$ where
$K=\{\neg t \vee c, \neg c\vee \neg r, c\vee r\}$
and $\preceq_K=\preceq^{bo}_\Sigma$ is such that\footnote{$\preceq^{bo}_\Sigma$ stands for best-out pre-ordering.} $\{\omega_2,\omega_6,\omega_7,\omega_{15}\} \preceq^{bo}_\Sigma \{\omega_1,\omega_3,\omega_4,\omega_{5},\omega_{8},\omega_{11},\omega_{12}, \omega_{16}\}$. The set $\{\omega_9,\omega_{10}, \omega_{13},\omega_{14}\}$ being excluded as counter-models of $P=k\vee \neg c$. 
  \item Another option is to provide the pair $(K,\preceq^{lex}_\Sigma)$ with a more refined ordering taking into account the number of violated formulas. The lexico pre-order (denoted $\preceq^{lex}_\Sigma$) can be obtained by first stratifying the formulas of $K$ as follows: for each default rule of $\Delta$ we compute the degree of necessity of its propositional counterpart: $\neg t \vee c$ is violated by $\{\omega_3, \omega_4,\omega_{11},\omega_{12}\}$, $\neg c\vee \neg r$ is violated by $\{\omega_1, \omega_5,\omega_{9},\omega_{13}\}$, $c\vee r$ is violated by $\{\omega_4, \omega_8,\omega_{12},\omega_{16}\}$ leading to assign to all these three formulas the same necessity, hence the same stratum. All formulas in $P$ are in a separated stratum with the highest priority (i.e., with a necessity degree equal to 1). It allows us to refine the previous pre-order $\preceq^{bo}_\Sigma$ on interpretations by counting the violated formulas in each stratum. For instance $\omega_{11}$ and $\omega_{12}$ where equivalent with respect to the best-out ordering $\preceq^{bo}_\Sigma$. Considering lexico-ordering $\preceq^{lex}_\Sigma$ we get a tuple $(0,1)$ for $\omega_{11}$ meaning that $\omega_{11}$ violates 0 formula in the stratum of highest priority and 1 in the other stratum (namely $\neg t\vee c$), we get $(0,2)$ for $\omega_{12}$ (namely violating $\neg t\vee c$ and $c \vee r$) yielding $\omega_{11}\preceq^{lex}_\Sigma \omega_{12}$. The lexico-ordering gives more precisely:
  $\{\omega_2,\omega_6,\omega_7,\omega_{15}\} \preceq^{lex}_\Sigma \{\omega_1,\omega_3,\omega_{5},\omega_{8},\omega_{11}, \omega_{16}\}\preceq^{lex}_\Sigma \{\omega_4,\omega_{12}\}$, the set $\{\omega_9,\omega_{10}, \omega_{13},\omega_{14}\}$ being again excluded as counter-models of $P$.
\end{enumerate}
\end{ex}

After showing how a commonsense knowledge base naturally induces a belief revision operator, we end this background section with a brief review of surprise modeling in the literature.

\subsection{From nonmonotonicity to surprises}\label{surprise}

Surprise arises from the occurrence of something considered practically impossible or unexpected. The first version of the possibility theory (\cite{Zad78}, \cite{DuPr1988}) proposed by the English economist \cite{Shackle1961} was based on the notion of the degree of surprise associated with an event, which was in fact a degree of impossibility of this event, calculated from a possibility distribution reflecting the uncertain knowledge about the state of the world considered. A fact $A$ is then all the more surprising that it is less coherent with what is imagined possible. Formally,
\begin{center}
 \texttt{surpriseDegree}$(A) = 1 - $\texttt{possibility}$(A)$. 
\end{center}

In AI, the question of surprises has been studied by  \cite{LoCa2007} who associate to surprise an incomprehension due to the difficulty to integrate the new perception, ``she cannot believe it''. The degree of surprise or astonishment is then associated with the probability of this new perception formalized in the \emph{logic of probabilistic quantified beliefs} by  \cite{FaHa1994}. For these authors, the notion of surprise is first defined by a form of inconsistency with explicit expectations or with a priori knowledge. Then, once the surprise has arrived, the agent must revise his knowledge. \\

We now are in position to present our formalization of joke understanding in terms of belief revision.

\section{Formalization of surprising and revealing statements}\label{formalisation}
One of the first models proposed in linguistics for the analysis of jokes is that of \cite{Suls1972}. 
It is a two-phase model: the punchline which is unexpected,
followed by a resolution (thought in terms of problem solving) which restores consistency (see also \cite{Shultz1976}). 
\cite{Ritchie1999,Ritchie2002} 
works on riddles such as ``Why do birds fly south in winter? It’s too far to walk''. He identifies the presence of four entities in such riddles: a simple (most obvious) interpretation of the context, a hidden meaning of this context, a meaning of the punchline and an interpretation formed with integrating the punchline with the hidden meaning of the context.
 Let us note that in other riddles the unexpectedness could rely on double-entente:
``Postmaster: Here’s your five-cent stamp.
Shopper: Do I have to stick it on myself?
Postmaster: Nope. On the envelope.''

In the approach by \cite{DuPr2020}, belief change is used to define the surprise and thus to model its integration in the listener's mind like Lorini and Castelfranchi did. Indeed the punchline of the joke seen as a new piece of information triggers a revision of the beliefs whose result is surprising in the sense that it is inconsistent with what was initially believed. 

Thus, we  formalized the listener manipulation as a two-phase process (\cite{DuPr2020,DuPr2022}):

\begin{itemize}
\item The first phase is a revision of the listener's beliefs with incomplete information intended to suggest a conclusion (which will turn out to be false). 
\item The second phase corresponds to the arrival of the punchline, which on the one hand surprises because it is incompatible with the previous provisional conclusion, which is translated in our model by an inconsistency. Moreover, the punchline is revealing, which we express by the fact that it logically explains the initial information.
\end{itemize}

Moreover, in this study we limit ourselves to the handling of the reasoning carried out by an agent listening to a joke. We leave aside the reasoning that the speaker may have done to develop the joke by adapting it to what he knows about the audience, not to mention the evaluation of whether or not to tell the joke in the situation.


\subsection{Formalization of surprises in jokes}
A joke is a progressive story. This means that when the listener receives $\alpha$ (the context of the joke) he or she is put in a situation of reasoning under incomplete information. This is indeed the teller's trick to lead the listener to wrong conclusions from $\alpha$ in order to finally surprise him or her with the punchline $\beta$. Indeed the teller manipulates the belief states of the listener. This is made possible because the listener is ready to use default reasoning to make sense of $\alpha$. Consider the extreme case of a fully psycho-rigid listener who would use only strict rules with all possible exceptions explicitly mentioned. He or she would not jump to any conclusion from $\alpha$ at all, and would wait for  $\beta$ for understanding what is going on. Thus this psycho rigid listener cannot be surprised (he or she is a monotonic thinker) and is not open to laughing.
\footnote{This is illustrated by the well-known story in which a mathematician and a physician are clearly reluctant to make a close world assumption in order to jump to a default conclusion in presence of incomplete information: A mathematician, a physicist and a biologist are on a train in Ireland. Through the window, they see a black sheep.
How interesting, exclaimed the biologist, in Ireland, sheep are black!
- You can't say that, replied the physicist. Certainly, there is at least one black sheep in Ireland.
- Come, come, continues the mathematician, the only thing that can be said is that there is at least one sheep, at least one side of which is black!
}

We now propose a formalization of jokes in propositional logic, i.e., we consider that the joke describes a situation $\alpha$ and that the punchline completes this description by an information $\beta$. The information $\alpha$ and $\beta$ are propositions of $\L$. 
We are interested in the case where the joke is heard by a listener whose knowledge base is a set of propositional formulas, here assimilated to its conjunction denoted $K$. Moreover the listener is also characterized by the way $\circ$ she revises her beliefs according to a pre-order $\preceq_K$ as explained in the previous section.

In the following, a joke is seen as a statement that is a pair (<context>, <punchline>) with particular characteristics. We define a statement as follows:

\begin{defi}
A \emph{statement} is a pair $(\alpha,\beta)$ formulas of $\L$.
\end{defi}

The cognitive situation induced by a statement is described by the knowledge base $K$ revised successively by the two elements of the statement. 
The punchline of a joke is considered \emph{surprising} if the result of the revision of $K$ by the description of the initial situation is contradictory to what we get after a revision by the punchline. Let us mention that in order to be surprise the listen should first be able to make an expectation from the context she or he heard. This is why the first condition to be surprised is that $K \circ \alpha$ is consistent.

\begin{defi}\label{surprenant}
A statement $(\alpha,\beta)$ is \emph{surprising} for a listener associated with $(K,\circ)$ if $$(K \circ \alpha) \mbox{ is consistent}\footnote{This condition is automatically ensured by the usual revision postulates (see \ref{revision}) as soon as $\alpha$ is consistent.}\mbox{ and }(K \circ \alpha)\wedge (K \circ (\alpha \wedge \beta))\models \bot$$
\end{defi}
Note that the above definition excludes the possibility that the listener already knows the joke. Indeed, in the latter case, $\alpha$ and $\beta$ would already be deducible from $K$, prohibiting any surprise. Besides, we do not require that $K\circ(\alpha \wedge \beta)$ be consistent, enabling nonsensical punchlines in Definition \ref{surprenant}\footnote{However, nonsensical punchlines will not be revealing in the sense of the next definition \ref{definevitable} due to postulates (KM5) and (KM6).}.

\begin{exs}{ktcr}
Now given the piece of information $t$ under the integrity constraints $P$ the listener equated with $(K,\preceq_K)$ will revise $K$ by $t$ by computing $Mod(K\circ^P t)=\min(Mod(t),\preceq_K)$ and obtain that the only model of the world is $\omega_2$.
Assuming that the new information is $t\wedge r$, three models correspond to $K\circ^P (t\wedge r)$ namely $\omega_1$, $\omega_3$ and $\omega_{11}$ where $\omega_2$ no longer appear, acknowledging that $(t,t\wedge r)$ is surprising for the listener.
\end{exs}

\subsection{Potentially funny jokes}
To understand the joke, its logic must seem implacable once the punchline is revealed, so the punchline is both admissible ($K \circ \beta$ consistent) and explains the situation. In other words, if one had known $\beta$ from the beginning, it would have explained $\alpha$.  This can be translated as follows:

\stepcounter{footnote}
\begin{defi}\label{definevitable}
Given a statement $(\alpha,\beta)$, its punchline $\beta$ is \emph{revealing} for $(K,\circ)$ if $$(K \circ \beta) \mbox{ is consistent}\footnotemark[\value{footnote}] 
\mbox{ and }K\circ\beta\models \alpha$$
\end{defi}
\footnotetext{This condition is automatically ensured by the revision postulates as soon as $\beta$ is consistent, see Section \ref{revision}.} 

Laughter can be viewed as a way of relieving the tension of cognitive dissonance caused by the inconsistency between what one expected to hear and the punchline of the story. This relief can only take place once the story is understood, with the punchline playing the role of a revelation.

\begin{defi}
A statement is \emph{potentially funny} for a listener associated with $(K,\circ)$ if it is surprising and its punchline is revealing for this listener.
\end{defi}

Figure \ref{potfun} pictures a representation of a pre-order $\preceq_K$ where each layer is a set of interpretations at the same plausibility level (the most plausible layer being at the bottom of the figure).  Models of $\alpha$ are in the dashed circle, and those of $\beta$ are in the dotted circle. The models of $K \circ \alpha$ and $K\circ \beta$ (which here are equal to $K\circ (\alpha \wedge \beta)$) are materialized by a thick dashed segment and a thick dotted segment respectively. One can check that the two segments are distinct one being strictly less plausible than the other making clear that $Mod(K\circ \alpha)\cap Mod(K\circ(\alpha \wedge \beta))=\emptyset$, i.e., $(\alpha,\beta)$ is surprising for the listener $(K,\circ)$. Moreover $K\circ\beta\models \alpha$ hence the punchline $\beta$ is revealing for our listener.

\begin{figure}[h]
\begin{center}
\tikzset{>={Latex[scale=1.1]}}
\begin{tikzpicture}
\draw[line width=1pt] (0,4) -- (5,4);
        \draw[line width=1pt,name path=line3] (0,3) -- (5,3);
        \draw[line width=1pt] (0,2) -- (5,2);
        \draw[line width=1pt,name path=line1] (0,1) -- (5,1);
        \draw[line width=2pt] (0,0) -- (5,0) ;
        \node[anchor=west] (labelk) at (6,0) {$Mod(K)$};
\node[blue,ellipse, line width=2pt,draw, dashed, minimum height=3cm, minimum width=2.5cm,name path=lalpha] (alpha) at (2.5,2) {};
         \node[blue,xshift=.9cm,yshift=-.5cm] at (alpha.north east) {$Mod(\alpha)$};
         \node[red,ellipse, draw, line width=2pt,dotted, minimum height=2cm,minimum width=2cm,name path=lbeta] (beta) at (1.7,3.5) {};
         \node[red,xshift=.5cm,yshift=.3cm] at (beta.north east) {$Mod(\beta)$};
\draw[blue,name intersections={of=line1 and lalpha},line width=3pt, dashed]
    (intersection-1) --    (intersection-2) ;
\draw[red,name intersections={of=line3 and lbeta, by={I1,I2}}, name intersections= {of=line3 and lalpha, by={I3,I4}}, line width=3pt, dotted]
    (I4) -- (I2) ;
    \node[anchor=west] (labelkob) at (6,3.7) {$Mod(K\circ \beta)$} ;
    \node[anchor=west] (labelkoa) at (6,1.5) {$Mod(K\circ \alpha)$} ;         
\draw[->,densely dotted] (labelkob) -- (2.4,3.7) -- (2,3.1);
\draw[->,densely dotted] (labelkoa) -- (2.9,1.5) -- (2.5,1.1);
\draw[->,densely dotted] (labelk) -- (5.2,0);

\draw[->] (-1,-.5)node[below]{$\preceq_K$} -- (-1,5); 
\end{tikzpicture}
\end{center}
\caption{A typical situation in which the statement $(\alpha,\beta)$ is potentially funny}\label{potfun}
\end{figure}
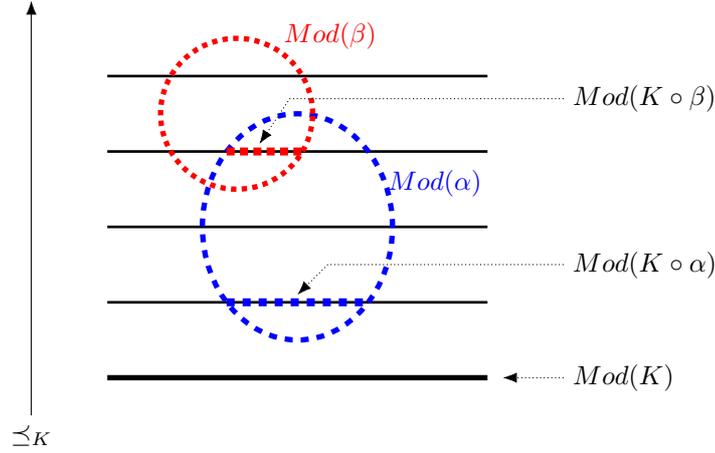

This definition of a potentially funny statement is directly inspired by \cite{Raccah2015}, except that we tried to define the revealing character of the punchline rather than its inevitability (which could be understood as $K \circ \alpha\models \beta$, denying the surprising character of $\beta$). 
Note that the definition relies on both $K$ and $\circ$, which may differ from one listener to another, which is consistent with the fact that a joke may not be found funny by everyone.

The following proposition restates the definition of a potentially funny joke in a nonmonotonic setting: namely it shows how a listener equated with a pair $(K,\circ)$ induced from her or his defeasible knowledge base $\Sigma(P,\Delta)$, can find a joke surprising and revealing. Note that this definition is done for statements $(\alpha,\beta)$ consistent with $P$, in agreement with the conditions $K \circ \alpha$ and $K\circ \beta$ consistent of Definitions \ref{surprise} and \ref{definevitable}.

\begin{prop}\label{transform} Let $\Sigma=(P,\Delta)$ be a knowledge base where $P$ is a consistent set of propositional formulas and $\Delta$ is a consistent set of default rules and let $\preceq^{bo}_\Delta$ be the total best-out pre-order on the set of interpretations $\Omega$ uniquely defined from $\Delta$. Let $(K,\circ)$ be such that $K=\{\neg \alpha_i \vee \beta_i|\alpha_i\leadsto \beta_i\in \Delta\}$ and $\circ$ be an operator defined from the total pre-order $\preceq^{bo}_\Delta$ for all $\varphi\in \L$ by:  $Mod(K\circ \varphi)=\min(Mod(\varphi),\preceq^{bo}_\Delta)$.

Let $\alpha,\beta\in \L$ such that $\{\alpha,\beta\}$ is consistent with $P$,
\begin{itemize}
\item The statement $(\alpha,\beta)$ is \emph{surprising} for a listener, equated with $(K,\circ)$, if and only if  $ \min(Mod(\alpha),\preceq^{bo}_\Delta)\cap \min(Mod(\alpha\wedge \beta),\preceq^{bo}_\Delta)=\emptyset$ 

\item The statement  $(\alpha,\beta)$ is \emph{revealing} for $(K,\circ)$ 
\quad if and only if \quad $\beta \entails^{bo}_\Delta \alpha$ 
\end{itemize}
where $\entails^{bo}_\Delta$ is the non-monotonic inference relation defined by $\varphi \entails^{bo}_\Delta \psi$ if and only if for each $\omega \in \min(Mod(\varphi),\preceq^{bo}_\Delta)$, $\omega \models \psi$. 

The same proposition holds for the lexicographic ordering, i.e., if we replace ``bo'' by ``lex''. 
\end{prop}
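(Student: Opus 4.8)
The plan is to prove both equivalences by a direct unfolding of Definitions~\ref{surprenant} and~\ref{definevitable}, using only the defining property $Mod(K\circ\varphi)=\min(Mod(\varphi),\preceq^{bo}_\Delta)$ assumed in the proposition. First I would record one preliminary fact: since the propositional language induces a finite $\Omega$ and $\preceq^{bo}_\Delta$ is a \emph{total} pre-order, every nonempty $Mod(\varphi)$ has a nonempty set of $\preceq^{bo}_\Delta$-minimal elements, so $K\circ\varphi$ is consistent whenever $\varphi$ is (this is nothing but postulate (KM3) for the operator induced by a total pre-order). Then, from the hypothesis that $\{\alpha,\beta\}$ is consistent with the consistent set $P$, the set $P\cup\{\alpha,\beta\}$ has a model, whence $\alpha$, $\beta$ and $\alpha\wedge\beta$ are each consistent, and therefore $K\circ\alpha$, $K\circ\beta$ and $K\circ(\alpha\wedge\beta)$ are all consistent; this settles once and for all the ``consistency'' clauses occurring in the two definitions.

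For the surprising clause I would chain equivalences: by Definition~\ref{surprenant}, $(\alpha,\beta)$ is surprising iff $K\circ\alpha$ is consistent and $(K\circ\alpha)\wedge(K\circ(\alpha\wedge\beta))\models\bot$; the first conjunct is automatic by the preliminary fact, and the second says exactly $Mod(K\circ\alpha)\cap Mod(K\circ(\alpha\wedge\beta))=\emptyset$. Substituting $Mod(K\circ\alpha)=\min(Mod(\alpha),\preceq^{bo}_\Delta)$ and $Mod(K\circ(\alpha\wedge\beta))=\min(Mod(\alpha\wedge\beta),\preceq^{bo}_\Delta)$ turns this into the announced condition $\min(Mod(\alpha),\preceq^{bo}_\Delta)\cap\min(Mod(\alpha\wedge\beta),\preceq^{bo}_\Delta)=\emptyset$; since every link in the chain is an ``iff'', this yields the equivalence.

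For the revealing clause I would argue in the same way: by Definition~\ref{definevitable}, $\beta$ is revealing iff $K\circ\beta$ is consistent (automatic) and $K\circ\beta\models\alpha$, i.e.\ $Mod(K\circ\beta)\subseteq Mod(\alpha)$; substituting $Mod(K\circ\beta)=\min(Mod(\beta),\preceq^{bo}_\Delta)$, this reads ``every $\omega\in\min(Mod(\beta),\preceq^{bo}_\Delta)$ satisfies $\alpha$'', which is precisely the definition of $\beta\entails^{bo}_\Delta\alpha$ given in the statement. Finally, to obtain the last sentence I would observe that nothing above used any feature of $\preceq^{bo}_\Delta$ beyond its being a total pre-order on $\Omega$ through which $\circ$ is defined by the ``take $\preceq$-minimal models'' recipe; the lexicographic pre-order $\preceq^{lex}_\Delta$ shares exactly these features, so the same chains of equivalences prove the ``lex'' variant verbatim.

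I do not expect a genuine obstacle: the proposition is essentially a dictionary entry translating the revision-based formulation of Section~\ref{formalisation} into a nonmonotonic-inference one, and its proof is mechanical unfolding. The only points deserving a line of care are (i) checking that the consistency preconditions of Definitions~\ref{surprenant} and~\ref{definevitable} are vacuously met under the standing hypothesis --- equivalently, that (KM3) holds for the operator built from a total pre-order, which is exactly where the assumption ``$\{\alpha,\beta\}$ consistent with $P$'' is used --- and (ii) noticing that the equivalences would fail if $\alpha$ (resp.\ $\beta$) were inconsistent, since then the left-hand side becomes false while the right-hand side stays vacuously true, so this hypothesis cannot be dropped.
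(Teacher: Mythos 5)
Your proposal is correct and follows essentially the same route as the paper's own proof: both arguments are a mechanical unfolding of Definitions~\ref{surprenant} and~\ref{definevitable} via the substitution $Mod(K\circ\varphi)=\min(Mod(\varphi),\preceq^{bo}_\Delta)$, with the hypothesis that $\{\alpha,\beta\}$ is consistent with $P$ used only to discharge the consistency preconditions, and the ``lex'' case handled by noting that nothing depends on which total pre-order underlies $\circ$. The one minor divergence is that the paper first checks that the assignment $K\mapsto\preceq^{bo}_\Delta$ is faithful (models of $K$ violate no default, hence are minimal) so that Theorem~\ref{theoKM} yields the KM postulates and in particular (KM3), whereas you obtain the needed consistency of $K\circ\alpha$ and $K\circ\beta$ directly from the fact that a total pre-order with finitely many levels assigns a nonempty set of minimal models to any consistent formula --- an equally valid, slightly more elementary justification of the same point.
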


\begin{proof} Let us first show that $\circ$ is a revision operator satisfying KM postulates: this is due to the fact that any interpretation that does not violate any default of $\Delta$ is in the set of the interpretations that are the most preferred ones according to $\preceq^{bo}_\Delta$. Now, each interpretation that satisfies $K=\bigwedge_{\alpha_i\leadsto \beta_i\in \Delta} \neg \alpha_i\vee \beta_i$ does not violate any default, which means that the function that assigns to $K$ the total pre-ordering $\preceq_K=\preceq^{bo}_\Delta$ is faithful, due to Theorem \ref{theoKM} we get that $\circ$ satisfies KM postulates.

Due to Definition \ref{surprenant}, a statement $(\alpha,\beta)$ is surprising for $(K,\circ)$ if $(K \circ \alpha)$ is consistent and $(K\circ \alpha) \wedge (K \circ (\alpha\wedge \beta))\models\bot$, due to the assumption that $\{\alpha,\beta\}$ is consistent with $P$, it means that $\alpha$ is consistent hence due to (KM3), $K\circ \alpha$ is consistent. Moreover by definition of $\circ$ based on $\Delta$, $Mod(K\circ \alpha)=\min(Mod(\alpha),\preceq^{bo}_\Delta)$ and $Mod(K\circ (\alpha\wedge \beta))=\min(Mod(\alpha\wedge\beta),\preceq^{bo}_\Delta)$. Hence the result.

Due to Definition \ref{definevitable}, a statement $(\alpha,\beta)$ is revealing for $(K,\circ)$ if $K \circ \beta$ is consistent and $K\circ \beta\models \alpha$. Again, due to the assumption that $\{\alpha,\beta\}$ is consistent with $P$, it holds that $\beta$ is consistent due to (KM3), $K\circ \beta$ is consistent. Moreover due to the definition of $\circ$ based on $\Delta$, $Mod(\!K\!\circ\! \beta)\!=\!\min(Mod(\beta),\leq^{bo}_\Delta)$. Hence the result.

This proof holds if we replace the best-out ordering with the lexicographic ordering, because the interpretations that do not violate any default are also the preferred ones for $\preceq^{lex}_\Delta$. 
\end{proof}

We are now in position to show that a psycho-rigid  listener (i.e., an agent that describes the world only with strict rules) cannot understand a potentially funny joke, because he cannot be surprised:
\begin{prop}[psycho-rigidity]
If $\Sigma=(P,\Delta)$ is a knowledge base where $P$ is a consistent set of propositional formulas and $\Delta$ is empty, then for any pair $(\alpha,\beta)$ such that $\{\alpha,\beta\}$ is consistent with $P$, the statement $(\alpha,\beta)$ is \emph{not potentially funny} for the listener $(K,\circ)$ built from $\Sigma$ as in the previous proposition.
\end{prop}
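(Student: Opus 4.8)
The plan is to show that when $\Delta=\emptyset$, the best-out pre-order $\preceq^{bo}_\Delta$ collapses to the trivial pre-order in which all interpretations are equally plausible, and then to use Proposition \ref{transform} to conclude that no statement consistent with $P$ can be surprising, hence none can be potentially funny.

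First I would observe that with $\Delta=\emptyset$ there are no constraints of the form $\Pi(\alpha_i\wedge\beta_i)>\Pi(\alpha_i\wedge\neg\beta_i)$, so the least restrictive possibility distribution $\pi_\Delta$ assigns the top value $1$ to every interpretation; equivalently, every interpretation violates zero defaults, so the best-out rank of each $\omega$ is the same. Hence $\preceq^{bo}_\Delta$ is the total pre-order with a single layer: $\omega=^{bo}_\Delta\omega'$ for all $\omega,\omega'\in\Omega$. (The same holds for $\preceq^{lex}_\Delta$, since the lexicographic tuple is empty for every interpretation.) Note also that $K=\bigwedge_{\alpha_i\leadsto\beta_i\in\Delta}(\neg\alpha_i\vee\beta_i)$ is the empty conjunction, i.e. a tautology, so $Mod(K)=\Omega$ and the operator $\circ$ built in Proposition \ref{transform} satisfies $Mod(K\circ\varphi)=\min(Mod(\varphi),\preceq^{bo}_\Delta)=Mod(\varphi)$ for every consistent $\varphi$.

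Next I would apply the first item of Proposition \ref{transform}. For a statement $(\alpha,\beta)$ with $\{\alpha,\beta\}$ consistent with $P$, being surprising is equivalent to $\min(Mod(\alpha),\preceq^{bo}_\Delta)\cap\min(Mod(\alpha\wedge\beta),\preceq^{bo}_\Delta)=\emptyset$. But since $\preceq^{bo}_\Delta$ has a single layer, $\min(Mod(\alpha),\preceq^{bo}_\Delta)=Mod(\alpha)$ and $\min(Mod(\alpha\wedge\beta),\preceq^{bo}_\Delta)=Mod(\alpha\wedge\beta)$, so the intersection is $Mod(\alpha)\cap Mod(\alpha\wedge\beta)=Mod(\alpha\wedge\beta)$. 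This is empty only if $\alpha\wedge\beta$ is unsatisfiable; but the hypothesis that $\{\alpha,\beta\}$ is consistent with $P$ forces $\alpha\wedge\beta$ to be satisfiable, so the intersection is nonempty and $(\alpha,\beta)$ is \emph{not} surprising. Since a potentially funny statement must in particular be surprising, $(\alpha,\beta)$ is not potentially funny, which is the claim; the argument for the lexicographic ordering is identical.

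I do not anticipate a serious obstacle here: the proof is essentially the remark that an empty default set produces a flat plausibility ordering, under which revision by $\varphi$ returns exactly $Mod(\varphi)$, so $K\circ\alpha$ and $K\circ(\alpha\wedge\beta)$ always share the models of $\alpha\wedge\beta$. The only point requiring a little care is the bookkeeping that "$\{\alpha,\beta\}$ consistent with $P$" genuinely delivers satisfiability of $\alpha\wedge\beta$ (it does, since $Mod(P)\cap Mod(\alpha)\cap Mod(\beta)\neq\emptyset$ implies $Mod(\alpha\wedge\beta)\neq\emptyset$), and the harmless observation that $K$ reduces to a tautology when $\Delta=\emptyset$, so the construction of Proposition \ref{transform} still makes sense and its conclusions apply verbatim.
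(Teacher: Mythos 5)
Your proposal is correct and follows essentially the same route as the paper's proof: an empty $\Delta$ yields a flat (single-layer) pre-order, so $\alpha$ and $\alpha\wedge\beta$ share minimal models, the statement cannot be surprising, and hence is not potentially funny (the lexicographic case being identical). Your extra bookkeeping (that $K$ is a tautology and that consistency of $\{\alpha,\beta\}$ with $P$ gives satisfiability of $\alpha\wedge\beta$) just makes explicit what the paper leaves implicit.
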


\begin{proof} If $\Delta$ is empty then $\preceq^{bo}_\Delta$ ranks every interpretation at the same priority level, hence since $\alpha$ and $\alpha\wedge\beta$ are consistent, they have common minimum models with respect to $\preceq^{bo}_\Delta$. Thus, the statement $(\alpha,\beta)$ is not surprising for $(K,\circ)$. This holds also with the $\preceq^{lex}_\Delta$ pre-ordering.
\end{proof}

\subsection{Joke examples and further propositions}\label{jokes}
We are now in a position to formalize a first example (from a French collection of "funny" stories by  \cite{Negre1970}) expressed in natural language that we have transposed into logic. Note that in this article, we ignore the conversion stage from natural language  to logic.

\begin{ex}\label{medecinmeme} 
A man has just been hit by a car. The driver gets out of the car and says: "You're lucky, we're right in front of a doctor's office. Yes! Except that the doctor is me!

Modeling:
\begin{center}
$\begin{array}{l}
\alpha=injured \wedge doctorNearby\\
\beta=injured \wedge doctorHimself\\
\end{array}$
\end{center}
We assume that the listener has the following knowledge:
$$\Sigma=\begin{array}{|l}\small
injured \wedge doctorNearby \leadsto treatedRapidly\\ 
injured \wedge \neg treatingDoctor \leadsto \neg treatedRapidly\\ 
doctorHimself \leadsto doctorNearby \\
injured \wedge doctorHimself \leadsto \neg treatingDoctor\\
\end{array}$$

As shown in Section \ref{defaultrevision}, from $\Sigma=(P,\Delta)$, $K$ and $\preceq_K$ can be built. 
The reader can check that we have the following pre-order on interpretations satisfying $\alpha$:\\ $\{R~\neg H~T~i~N\}\preceq_K \{R~H~\neg T~i~N,~ R~\neg H~\neg T~i~N,~ \neg R~ H~\neg T~i~N,~ \neg R~\neg H~T~i~N,~ \neg R~\neg H~\neg T~i~N\}$\\ $\preceq_K \{R~ H~T~i~N,~ \neg R~ H~T~i~N\}$ where $R$, $H$, $T$, $i$, $N$ are short for $treatedRapidly$, $doctorHimself$, $treatingDoctor$, $injured$, $doctorNearby$ respectively.

Due to postulate (KM1), $K \circ \alpha \models\alpha$. More precisely, the only model of $K \circ \alpha$ is $R~\neg H~T~i~N$ (it is the most plausible model of $\alpha$ according to $\preceq_K$).

\noindent The models of $K \circ(\alpha \wedge \beta)$ are $R~H~\neg T~i~N$ and $\neg R~ H~\neg T~i~N$. The punchline is surprising since $Mod(K\circ \alpha)\cap Mod(K\circ(\alpha\wedge \beta))= \emptyset$.

Moreover, the punchline explains $\alpha$: $K \circ \beta \models i \wedge N$ since $Mod(K\circ \beta)$ is the set of the best models of $\beta$ according to $\preceq_K$, which turn out to be here the same two models as the ones obtained for $K \circ(\alpha\wedge \beta)$. Indeed, $N=doctorNearby$ is the part of the joke that manipulated us by leading us to think that the wounded man was lucky.
\end{ex}

Note that Example \ref{medecinmeme} can also be seen as a cascading joke $(\varphi_1$, $\varphi_2$, $\varphi_3$, $\varphi_4)$ with $\varphi_1=injured$, $\varphi_2=lucky$, $\varphi_3=doctorNearby$, $\varphi_4=doctorHimself$. Indeed, saying ``You are very lucky'' after having noticed that the person is hurt can already provoke laughter. Then, the revelation of being ``in front of a doctor's office'' helps to understand. Finally, the second part of the joke corresponds to what has already been analyzed. Note that in the first part of the cascading joke, $\varphi_2$ is surprising but not revealing, it is $\varphi_3$ that explains $\varphi_2$. So for a cascading joke, the surprising and revealing effects are not necessarily simultaneous.

The following property shows the importance of narrative order in the joke.
\begin{prop}[Never give the punchline before the end!]
If $(\alpha,\beta)$ is a potentially funny statement for a listener knowing $K$, then $(\beta,\alpha)$ is not potentially funny for that listener.
\end{prop}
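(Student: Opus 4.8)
The plan is to show that the hypothesis forces $(\beta,\alpha)$ to fail the \emph{surprise} clause, so a fortiori it cannot be potentially funny. The only part of ``potentially funny for $(\alpha,\beta)$'' I intend to use is that its punchline is revealing, i.e. (Definition \ref{definevitable}) that $K\circ\beta$ is consistent and $K\circ\beta\models\alpha$. Combining the latter with (KM1), which gives $K\circ\beta\models\beta$, yields $K\circ\beta\models\alpha\wedge\beta$, with $K\circ\beta$ consistent.

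The key lemma-step is to prove $K\circ(\alpha\wedge\beta)\equiv K\circ\beta$. For one direction, apply (KM5) with $\varphi=\beta$ and $\psi=\alpha$: $(K\circ\beta)\wedge\alpha\models K\circ(\beta\wedge\alpha)$; since $K\circ\beta\models\alpha$ the left-hand side is equivalent to $K\circ\beta$, so $K\circ\beta\models K\circ(\beta\wedge\alpha)$. For the converse, note that the precondition of (KM6) is satisfied, because $(K\circ\beta)\wedge\alpha\equiv K\circ\beta$ is consistent; hence $K\circ(\beta\wedge\alpha)\models(K\circ\beta)\wedge\alpha\equiv K\circ\beta$. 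Rewriting $\beta\wedge\alpha$ as $\alpha\wedge\beta$ via (KM4) gives the equivalence. (Semantically this just says: the best $\beta$-models all satisfy $\alpha$, so restricting further to $\alpha$ changes nothing — matching the picture of Figure \ref{potfun} where $Mod(K\circ\beta)=Mod(K\circ(\alpha\wedge\beta))$.)

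Now conclude by contradiction. Suppose $(\beta,\alpha)$ is potentially funny; in particular, by Definition \ref{surprenant}, it is surprising, so $(K\circ\beta)\wedge\bigl(K\circ(\beta\wedge\alpha)\bigr)\models\bot$. By the equivalence just established this is $(K\circ\beta)\wedge(K\circ\beta)\models\bot$, i.e. $K\circ\beta\models\bot$, contradicting the consistency of $K\circ\beta$ obtained above. Hence $(\beta,\alpha)$ is not surprising, and therefore not potentially funny.

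The only delicate point — the ``main obstacle'' — is verifying that the precondition of (KM6) genuinely holds before invoking it, which is exactly where the consistency of $K\circ\beta$ (guaranteed here since $\{\alpha,\beta\}$, hence $\beta$, is consistent, cf. (KM3)) is used; everything else is bookkeeping with the KM postulates. It is also worth remarking in the write-up that the surprise half of the hypothesis on $(\alpha,\beta)$ is not needed: revealingness of $\beta$ alone already prevents $(\beta,\alpha)$ from being surprising.
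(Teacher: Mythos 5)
Your proof is correct and follows essentially the same route as the paper's: you use only the revealing condition ($K\circ\beta$ consistent and $K\circ\beta\models\alpha$) together with (KM5) and (KM6) to show $K\circ(\beta\wedge\alpha)\equiv K\circ\beta$, which blocks the surprise condition for $(\beta,\alpha)$. Your write-up merely spells out the precondition check for (KM6) that the paper leaves implicit.
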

\begin{proof} 
Since $(\alpha,\beta)$ is potentially funny, then the punchline is revealing, that is, $K\circ\beta\not\models \bot$ and $K\circ\beta\models \alpha$. 
But that prevents the story $(\beta,\alpha)$ from being surprising, and therefore potentially funny. Indeed, using (KM5) and (KM6), $K\circ (\beta \wedge \alpha)\equiv (K\circ \beta)\cup \{\alpha\}$ so $K\circ \beta$ is consistent with $K\circ (\beta \wedge \alpha)$.
\end{proof}

The next property indicates the virtue of brevity.

\begin{prop}[the shortest jokes...]\label{propcourte}
If $(\alpha,\beta)$ is a potentially funny statement knowing $K$ then adding $\alpha'\in \L$, such that $\alpha\not\models\alpha'$, can make the statement $(\alpha\wedge\alpha',\beta)$ not potentially funny. 
\end{prop}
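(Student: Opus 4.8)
The plan is to read the proposition as the existential statement it is --- ``can make $\ldots$ not potentially funny'' --- and to exhibit one uniform witness $\alpha'$ that works for \emph{any} potentially funny $(\alpha,\beta)$, namely $\alpha' := \beta$, the punchline itself.

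First I would verify that this choice respects the hypothesis $\alpha\not\models\alpha'$. The point is that a surprising statement always satisfies $\alpha\not\models\beta$: otherwise $\alpha\equiv\alpha\wedge\beta$, so by (KM4) $K\circ\alpha\equiv K\circ(\alpha\wedge\beta)$, and then the surprise requirement $(K\circ\alpha)\wedge(K\circ(\alpha\wedge\beta))\models\bot$ would collapse to $K\circ\alpha\models\bot$, contradicting the consistency of $K\circ\alpha$ that surprise also demands. Hence $\alpha'=\beta$ is a legitimate addition.

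Then I would show that $(\alpha\wedge\beta,\beta)$ is not surprising, which already kills potential funniness. From the ``revealing'' half of the hypothesis, $K\circ\beta$ is consistent and $K\circ\beta\models\alpha$; together with (KM1) this yields $K\circ\beta\models\alpha\wedge\beta$, so $\alpha\wedge\beta$ is consistent, and then (KM3) gives that $K\circ(\alpha\wedge\beta)$ is consistent. Now I instantiate Definition \ref{surprenant} at the statement $(\alpha\wedge\beta,\beta)$: its surprise clause requires $(K\circ(\alpha\wedge\beta))\wedge(K\circ((\alpha\wedge\beta)\wedge\beta))\models\bot$; since $(\alpha\wedge\beta)\wedge\beta\equiv\alpha\wedge\beta$, postulate (KM4) lets me rewrite this as $K\circ(\alpha\wedge\beta)\models\bot$, which we just saw is false. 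So $(\alpha\wedge\beta,\beta)$ is not surprising, hence not potentially funny, and since $\alpha\wedge\alpha'=\alpha\wedge\beta$ the proposition follows.

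I do not expect a genuine obstacle here: the argument is bookkeeping with the KM postulates, and the only subtlety is remembering that passing from $K\circ((\alpha\wedge\beta)\wedge\beta)$ to $K\circ(\alpha\wedge\beta)$ is licensed precisely by syntax-independence (KM4). If a more story-like witness is preferred, the same effect can be displayed on Example \ref{medecinmeme} by taking $\alpha'=doctorHimself$: then $\alpha\wedge\alpha'$ already entails the punchline $\beta$, the minimal models of $K\circ(\alpha\wedge\alpha')$ and of $K\circ(\alpha\wedge\alpha'\wedge\beta)$ coincide by inspection of the displayed pre-order $\preceq_K$, and the story stops being surprising --- the moral being that mixing the punchline into the setup reliably ruins the joke.
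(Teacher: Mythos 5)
Your proof is correct, but it follows a genuinely different route from the paper's. The paper proves the proposition by exhibiting a concrete counterexample: the penguin knowledge base $P=\{p\to b\}$, $\Delta=\{b\leadsto f,\ p\leadsto\neg f,\ b\wedge a\leadsto p\}$, in which $(b,p)$ is potentially funny, while adding $\alpha'=a$ destroys the surprise, and adding an irrelevant $\alpha'=t$ (``called Tweety'') destroys the revealing character instead. You instead give a uniform, purely postulate-driven argument: take $\alpha'=\beta$ for an arbitrary potentially funny $(\alpha,\beta)$; surprise forces $\alpha\not\models\beta$ (otherwise (KM4) would collapse $K\circ\alpha$ and $K\circ(\alpha\wedge\beta)$ and contradict consistency of $K\circ\alpha$), and revelation plus (KM1), (KM3), (KM4) show that $(\alpha\wedge\beta,\beta)$ cannot be surprising since $K\circ((\alpha\wedge\beta)\wedge\beta)\equiv K\circ(\alpha\wedge\beta)$ is consistent. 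All steps check out. What your approach buys is strength and generality: you establish the universal reading of the proposition --- \emph{every} potentially funny statement for \emph{every} listener $(K,\circ)$ can be spoiled by lengthening the context --- using only the KM postulates, with no particular knowledge base; amusingly, the paper itself gestures at your witness $\alpha'=\beta$ only informally, right after its proof, as a way ``to convince ourselves''. What the paper's route buys is illustrative value and a finer diagnosis: its two counterexamples show the two distinct failure modes (loss of surprise when $\alpha'$ interacts with $\beta$, loss of revelation when $\alpha'$ is irrelevant), whereas your uniform witness only ever exhibits the loss-of-surprise mode, and it stays within the default-rule instantiation of $(K,\circ)$ that the rest of the section is built on.
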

\begin{proof}
Indeed here is a counterexample: let $\Sigma=(P,\Delta)$ where $P=p\to b$,
$\Delta=\{b \leadsto f$,  $p\leadsto \neg f$, $b \wedge a \leadsto p\}$ a knowledge base expressing that: generally birds ($b$) fly ($f$), penguins ($p$) are birds, generally penguins ($p$) don't fly, most Antarctic birds ($a$) are penguins. Let us consider $(K,\circ)$ built as explained in Section \ref{defaultrevision}.
Let us imagine a story $(b,p)$, it is both surprising since $K\circ b\models b \wedge \neg p \wedge\neg a \wedge f$ and $K \circ (b\wedge p)\models b\wedge p\wedge \neg f$ and the ``punchline'' $p$ is revealing since $K\circ p \models b$. However, by adding information $\alpha'=a$ that the bird is from Antarctica we can create an unsurprising situation with respect to $\beta=p$, so $(b\wedge a, p)$ is no longer a surprising story. 
Another counterexample consists in adding information $\alpha'$ that is not related to $\beta$ then we can obtain a punchline that is not completely revealing: let us consider $\alpha'= t$, where $t$ means ``is called Tweety'' then $K \circ p\models b$ does not allow us to deduce that $K\circ p\models t$. 
\end{proof}

To convince ourselves of the validity of the Property \ref{propcourte}, we can consider the particular case $\alpha'=\beta$. This would correspond in the example \ref{medecinmeme} to specifying that the driver has just run over a doctor in front of the doctor's office. In this case, the punch line ``Yes! except that the doctor is me! is no longer funny (because we already know that). Nevertheless the new story can be found funny in a different way with $\alpha''=$ \emph{A man has just run over his own doctor in front of his office} and $\beta''=$ \emph{You're very lucky, we're right in front of my doctor's office}. Here, the statement is quite surprising, but the revelation can only come if we perceive the irony of the use of the word lucky (in place of unlucky).

\medskip

We can notice that \ref{propcourte} does not require that the punchline be as short as possible. The following definition refines the definition of ``funny'' into ``efficient'' by imposing minimality (in terms of interpretations).
\begin{defi}[conciseness of the punchline]
Let $\alpha,\beta,\beta'\in \L$ be individually consistent formulas and $\beta\not\equiv\beta'$, if $K \circ \beta \models \beta'$ then the statement $(\alpha,\beta)$ is said to be more \emph{efficient} than the statement $(\alpha,\beta')$ for listener knowing $K$.
\end{defi}

In other words, the punchline is said to be more efficient if it is more precise and therefore more specific. Here is another example from \cite{Negre1970}'s collection.
 
\setitemize{
   fullwidth,
   leftmargin=0cm,
   nolistsep
}
\begin{ex}\label{femmechezmedecin} A guy and a woman enter the doctor's office. The doctor turns to the lady and says  :\begin{itemize}
\item If you are ill, please undress... \\
But the girl has manners. She is reluctant. She looks at the guy from underneath. So the doctor repeats:
\item But come on, lady, get undressed! I am a doctor. There's no indecency, here!
Then she starts to fidget and suddenly bursts into tears. Disconcerted, the doctor asks the guy: 
\item What's wrong with your wife? Is she always so nervous?
\item I don't know. I just met her in your waiting room...
\end{itemize}
\medskip

Modeling:~\\
$\varphi_1=together$,\quad 
$\varphi_2=reluctant$, \quad
$\varphi_3=together \wedge \neg knowEachOther $\\
$\Sigma=\begin{array}{|ll}\small
P & \neg knowEachOther \to \neg couple\\
\hline
&together \leadsto couple\\ 
\Delta &couple \leadsto \neg reluctant\\ 
&\neg couple \leadsto reluctant\\
\end{array}$

As computed in Example \ref{ktcr} where $k$, $t$, $c$, $r$ are short for $knowEachOther$, $together$, $couple$, $reluctant$, the only model of $K \circ \varphi_1$ is $\omega_2$, the models of $K \circ (\varphi_1 \wedge \varphi_2)$ are $\omega_1$ and  $\omega_3$ hence $\varphi_2$ is surprising since $Mod(K\circ\varphi_1) \cap Mod(K\circ(\varphi_1\wedge\varphi_2)) =\emptyset$.
Moreover in the lexicographic version of the pre-ordering over interpretations ($\preceq^{lex}_\Sigma$) the punchline $\varphi_3$ explains $\varphi_2$: since the most plausible model of $\varphi_3$ according to $\preceq^{lex}_\Sigma$ is $\omega_{11}$ hence $K \circ \varphi_3 \models reluctant$. Note that this result is not obtained with the best-out pre-ordering $\preceq^{bo}_\Sigma$ which is not able to discriminate between $\omega_{11}$ and $\omega_{12}$ as noticed in Example \ref{ktcr}.

Suppose now that the guy replied $\varphi'_3=$``Sorry, we entered together but we are not married'' ($\varphi'_3= together \wedge \neg couple$). It is clear that $K\circ \varphi_3 \models \varphi'_3$. Hence the old punchline $\varphi_3$ is more efficient than the new punchline $\varphi'_3$. 
\end{ex}

Note that we might have considered iterated revision \cite{KoPP00} and abductive expansion \cite{Pagnucco96}. However, in order to capture the cognitive aspects of understanding a joke, the main idea is to select the best epistemic state rather than to maintain the order of preferences over all states as in iterated revision. Indeed, in the basic revision, the epistemic states can be radically changed. The surprise would not be the same if the auditor had in mind all the potential explanations. In this respect, a similar argument applies to abductive expansion, whose goal is to provide reasons for the new epistemic state. Furthermore, we have favored the simplicity of classical revision theory.

\subsection{Graduality in surprising and revealing jokes}
A lesson that we draw from Raccah's article concerns the intensity of funniness: according to \cite{Raccah2015}, ``the joke is all the funnier that the trap was unexpected and inevitable''. 
This leads us to propose to define two quantities to classify the jokes. 
This would allow us to build a partial order on the jokes addressed to the same listener reflecting their potential funniness. 
This definition assumes that the knowledge of the listener contains default rules, as previously assumed, it should have the form: $\Sigma=(P,\Delta)$ enabling us to compute a possibility distribution $\pi_\Sigma$. 
The surprise level is conform to the view recalled in subsection \ref{surprise}. The second term is the conditional necessity of $\alpha$ knowing $K\circ \beta$, reflecting the ineluctability of the revelation. 

\begin{defi}[gradual effects in jokes]
Given a base $\Sigma=(\Delta ,P)$, a statement $(\alpha,\beta)$ is associated to the possibilistic levels: \begin{itemize}
\item a surprise level: $\left\{\begin{array}{ll}
0 &\mbox{ if } N(\alpha)= N(\alpha \wedge \beta),\\ 
1- \Pi(\alpha\wedge\beta) & \mbox{ otherwise}\\
\end{array}\right.$
\item a revealing level: $N(\alpha|K\circ\beta)$
\end{itemize}
where $K\circ \varphi$ is defined from $\Sigma$ by building $(K,\preceq_K)$ and $N$ and $\Pi$ are the necessity and possibility measures associated with  $\pi_\Sigma$, as explained in subsection \ref{nmr}.
\end{defi}

Let's not forget that other elements can contribute to the funniness, as the huge literature on humor can testify. All these additional aspects are themselves gradual in nature: such as the cognitive effort necessary to understand the joke, the language level, the comic effects of the narrator, the mood of the listener, his capacity of inhibition of his emotions,  etc. This is  a topic for further research. 

Moreover the modeling of laughter is a multi-agent problem involving a teller and a listener, especially when we come to the handling of social notions like complicity between listener and teller, and acceptable incongruity: the teller should take into account what he knows of the listener's norms. We now further discuss incongruity.

\section{Integrating incongruity}\label{Incong}

This section deals with inconsistency (associated with incongruity) as a new feature in jokes; this time inconsistency occurs with respect to listener's (and teller's) knowledge and not because of the listener's ability to reason with incomplete information. We first discuss incongruity and then absurdity as a particular case, before
introducing the idea of ``acting as if’’ a law / norm could be ignored, which is related to incongruity as we shall see. 

\subsection{Introducing incongruity and absurdity}
As said in Section \ref{Ing}, incongruity is usually related to the violation of social norms (i.e., lack of conformity with current practices, unsuitable behavior). Let us start with examples.

\begin{ex}~
\begin{itemize}
\item To drink water from a finger bowl is an \emph{incongruous} behavior because it violates hygienic norms.

\item Wearing oversized shoes is \emph{incongruous} because shoes are made for easy walking. This is why seeing a clown with such shoes has a comic effect all the more as it causes his falling. 
\end{itemize}
\end{ex}

One can notice there  that the comic effect occurs because the joke is deliberately in disregard with an obvious universal knowledge. Acting as if a universal law was not mandatory is an incongruous behavior.

As illustrated by the next example, a suitable behavior when communicating with people is to respect common knowledge about the object / situation under discussion. When someone derogates to this social principle, an \emph{absurd} communication takes place.

\begin{ex}~
\begin{itemize}
\item Speaking about someone drinking the water from a finger bowl is a description of an \emph{incongruous} behavior but it is \emph{not absurd} with respect to communication: it is not impossible that someone does it.

\item Speaking about carrying an elephant in an ordinary car (with 4 seats) is an \emph{absurdity}, because as everybody knows this is not feasible (an elephant is too big and too heavy).
\end{itemize}
\end{ex}

So absurdity may be viewed as a special form of incongruity as far as it comes from the lack of respect of some communication principle. Note that incongruity, hence absurdity, may be felt as surprising inasmuch incongruity is unusual (since resulting from the violation of current practices).

One can observe that in many comic situations, some ``acting as if’’ is at work. This means that a person behaves without taking into account some obvious facts or pieces of knowledge about the world. Thus the clown strives to walk normally despite of his unsuitable shoes. In the four elephants riddle, one strives to reason normally despite of the feasible impossibility of carrying elephants in a car, as discussed now.

\begin{ex}\label{elephant}~\\
 $i,e,tt,h$ being propositional formulas representing respectively the fact that 4 elephants are in the 2CV car, four elephants are in presence, two elephants are in the front and two elephants are in the rear of the 2CV car, huge animals are in presence.\\
$\alpha$ :  $i \wedge e$\\
$\beta$: $tt \wedge e$\\
$\Sigma=(P, \Delta)$ with $P=\{R1,R2,R3,R4\}$ and $\Delta=\emptyset$ where \\
R1:$h \to \neg i$ \hfill{(Something huge cannot be inside a 2CV car)}  \\
R2: $e \to h$ \hfill{(Elephants are huge)}\\
R3: $h\to \neg tt$ \hfill{(Something huge cannot be put in the rear and in the front)}\\
R4: $tt \to i$ \hfill{(Putting in the rear and in the front implies putting inside)}\\

$P\cup\{\alpha\}$ is inconsistent, hence the notion of surprising for a listener (whose revision operator is induced by $\Sigma$) given in previous section (Proposition \ref{transform}) does not apply. 
Another kind of surprise is at stake: the violation of an obvious law (R2: elephants are huge), thus we are in presence of an absurdity in communication. In this example the absurdity appears already in $\alpha$: $P \models \neg \alpha$. 
But the revealing part is still there, when forgetting this obvious law (R2), namely, $\{\beta\}\cup (P\setminus \{R2\})$  $\models \alpha$ hence $\beta$ is revealing for an agent knowing $\Sigma$ and ignoring $R2$.\\
\end{ex}

We now give another example where incongruity comes from the violation of social norms.

\begin{ex}\label{fire} My house caught fire (fh), my neighbor welcomes me to his house. I set fire to his house (sFn). You wonder why. Because he told me to do what I do at home ((an): allowed to make himself at home at the neighbor's), and my home was in fire. \\
$\alpha$ = fh $\wedge$ sFn\\
$\beta$ = fh $\wedge$ an\\
$\Sigma=(P, \Delta)$ with $P=\{R2\}$ and $\Delta=\{R1,R3, R4\}$ where \\
R1: sFn  $\leadsto $ ir \hfill{(set fire at the neighbor's is generally an irrational act)}\\
R2: fh $\wedge$ dh $\to$ sFn  \hfill{(having a fire at home and doing like at home implies to setting fire at neighbor's)}\\
R3: an $\leadsto$ $\neg$ ir  \hfill{(when allowed to make yourself at home generally you do not perform irrational acts)}\\
R4: an $\leadsto$ dh \hfill{(if you are allowed to make yourself at home then generally you do like at home)}\\

The use of ``allowed’’ is a way to encode permission, here there is a violation of a social permission (rule $R3$), the storyteller is not allowed to destroy the house of his neighbor despite being allowed to make himself at home.  
Nevertheless there is a revealing part, because with $\beta$ and $(P ,\Delta \setminus \{R3\})$ we get $\alpha$ (we reason logically in disregard of $R3$ which encodes an expected basic attitude).
\end{ex}

Note that, in the previous example, the surprise comes from the fact that an obvious law is violated, the revealing side of the joke is obtained by ignoring this particular piece of knowledge. 
It leads us to define a new notion that was already implicitly present in the examples formalized in the previous section: the violation of a social norm, and the revealing with disregard of a formula. 

\subsection{Formal handling of incongruity}
Before defining formally incongruity, we introduce the nonmonotonic entailment operator used for representing the listener reasoning. In the following definition, a knowledge base $\Sigma=(P,\Delta)$  induces a pre-order $\preceq^m_\Sigma$ on the set of interpretations $\Omega$ ($m$ being either the best-out or the lexicographic method) where$\preceq^m_\Sigma$ is the restriction of $\preceq^m_\Delta$ to the models of $P$. We define $\entails^m_\Sigma$, the nonomonotonic inference relation based on $\Sigma$, as follows. In this definition, $P$ acts as an integrity constraint, hence entailment is possible only from formulas consistent with $P$.

\begin{defi}[nonmonotonic entailment under $\Sigma= (P, \Delta)$]
\label{snake}
$\varphi \entails^m_\Sigma \psi$ if 
\begin{itemize}
\item $P \cup \{\varphi\}$ is consistent and 
\item for all $\omega\in\min (Mod(P\cup\{\varphi\})\preceq^m_\Sigma)$, $\omega \models \psi$. 
\end{itemize}
\end{defi}

The  notion of disregard is now defined in the setting of a knowledge base $\Sigma$ composed with strict and default rules from which non-monotonic reasoning can be performed, as described in subsection \ref{defaultrevision}. For capturing this notion we would like to express 
that the reasoning should lead to conclude that a formula representing a norm is violated.
%
This means that the whole joke (the context of the punch line) contradicts at least one norm: this is why the following definition  characterizes the disregard of one formula $\rho$, considered as a norm by the listener. In presence of such an incongruity the revealing in the sense of Definition \ref{definevitable} is no more desirable since we would like to reason while ignoring the norm. This leads us to the following definition.

\begin{defi}[incongruity]  \label{disdef}
Given $\Sigma=(P,\Delta)$, let $P_N\subseteq P$ and $\Delta_N\subseteq \Delta$ be subsets of formulas considered as norms in $\Sigma$, let $\rho$ be a formula either in $P_N$ or in $\Delta_N$,
the statement ($\alpha,\beta$) is \emph{felt incongruous} by a listener represented by $\Sigma$ if:
\begin{enumerate}
\item  $\alpha \wedge \beta\entails^m_{\Sigma\setminus \{\rho\}}  \mbox{Not }  \rho$  \hfill{(violation of $\rho$)}
\item 
$\beta\entails^m_{\Sigma\setminus\{\rho\}} \alpha$ \quad \hfill{(revelation in disregard of $\rho$)} 
\end{enumerate}
where, \begin{tabular}[t]{lll}
when $\rho \in P$, & $\Sigma\setminus \{\rho\}=(P\setminus \{\rho\},\Delta)$  and & Not  $\rho=\neg \rho$;\\
when $\rho=\alpha_i\leadsto \beta_i\in \Delta$, &$\Sigma\setminus \{\rho\}=(P,\Delta \setminus \{\rho\})$ and & Not $\rho=\alpha_i\wedge \neg \beta_i$.
\end{tabular}
\end{defi}
Note that a particular case of violation (1) is when $\alpha\wedge \beta$ is inconsistent with $P$. 
Besides,  some jokes cannot be felt incongruous because they violate an integrity constraint that cannot be forgotten, as made clear by  the following proposition.

\begin{prop} \label{prop}Any formula $\rho\in P$ such that $P\setminus \{\rho\}\models \rho$ cannot be considered as violated by a listener $(P,\Delta)$.
\end{prop}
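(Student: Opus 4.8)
The statement to prove is Proposition \ref{prop}: if $\rho\in P$ and $P\setminus\{\rho\}\models\rho$, then $\rho$ cannot be ``violated'' by the listener $(P,\Delta)$ in the sense of Definition \ref{disdef}, item (1). The plan is to show that the violation condition (1) of Definition \ref{disdef}, namely $\alpha\wedge\beta\entails^m_{\Sigma\setminus\{\rho\}}\neg\rho$, can never hold when $P\setminus\{\rho\}\models\rho$. I would argue by unfolding the definition of $\Sigma\setminus\{\rho\}$ and of the nonmonotonic entailment $\entails^m$.

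First I would recall that, since $\rho\in P$, by Definition \ref{disdef} we have $\Sigma\setminus\{\rho\}=(P\setminus\{\rho\},\Delta)$ and $\text{Not }\rho=\neg\rho$. So the violation condition reads $\alpha\wedge\beta\entails^m_{(P\setminus\{\rho\},\Delta)}\neg\rho$. By Definition \ref{snake}, this requires in particular that $(P\setminus\{\rho\})\cup\{\alpha\wedge\beta\}$ be consistent, and that every interpretation in $\min(Mod((P\setminus\{\rho\})\cup\{\alpha\wedge\beta\}),\preceq^m_{\Sigma\setminus\{\rho\}})$ satisfy $\neg\rho$.

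The key observation is: by hypothesis $P\setminus\{\rho\}\models\rho$, so every model of $P\setminus\{\rho\}$ is a model of $\rho$; hence $Mod((P\setminus\{\rho\})\cup\{\alpha\wedge\beta\})\subseteq Mod(\rho)$, and a fortiori every interpretation in the nonempty subset $\min(Mod((P\setminus\{\rho\})\cup\{\alpha\wedge\beta\}),\preceq^m_{\Sigma\setminus\{\rho\}})$ satisfies $\rho$, not $\neg\rho$. (The min-set is nonempty whenever $Mod((P\setminus\{\rho\})\cup\{\alpha\wedge\beta\})$ is nonempty, because $\preceq^m$ is a total pre-order on a finite set; and if that model set is empty, the consistency requirement of Definition \ref{snake} already fails.) Either way, condition (1) of Definition \ref{disdef} cannot be satisfied, so $(\alpha,\beta)$ cannot be felt incongruous via violation of $\rho$; i.e., $\rho$ cannot be considered as violated. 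The argument is insensitive to whether $m$ is the best-out or the lexicographic method.

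I do not anticipate a serious obstacle here: the result is essentially immediate once the definitions are unfolded, the only mild subtlety being to note that removing $\rho$ from $P$ does not actually remove $\rho$ as a constraint on the models (because it is entailed by the rest of $P$), which is exactly the intuitive content of the proposition. One should just be careful to treat the degenerate case where $(P\setminus\{\rho\})\cup\{\alpha\wedge\beta\}$ is inconsistent, handled by the first bullet of Definition \ref{snake}.
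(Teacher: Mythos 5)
Your proof is correct and follows essentially the same route as the paper's own argument: by monotonicity, $\rho$ still holds in every model of $(P\setminus\{\rho\})\cup\{\alpha\wedge\beta\}$, so the minimal models cannot all satisfy $\neg\rho$, blocking condition (1) of Definition \ref{disdef}. Your extra care about the degenerate inconsistent case is a minor refinement, not a different approach.
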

\begin{proof} By monotonicity of classical logical consequence, we would have $\{\alpha\wedge\beta\} \cup P\setminus\{\rho\} \models \rho$. Definition \ref{snake} would lead to have for all $\omega\in \min(Mod(\{\alpha\wedge \beta\} \cup P\setminus\{\rho\})$, $\omega \models \neg \rho$ which is contradictory. Hence the result. 
\end{proof}

 The situation is different when $\rho=\alpha_k\leadsto\beta_k \in \Delta$, it is possible that $\alpha\wedge \beta \entails^m_{\Sigma\setminus\{\rho\}} \alpha_i\wedge \neg \beta_i$ holds together with $\alpha_i \entails^m_{\Sigma\setminus\{\rho\}} \beta_i$ (i.e., the knowledge base $\Sigma\setminus\{\rho\}$ still entailing the default $\rho$). However, it is not a problem to allow such formulas to be disregarded because in presence of $\alpha\wedge\beta$ the nonmonotonic mechanism will favor the inference of $\alpha_i\wedge \neg \beta_i$, hence the violation of the norm $\rho$. 

In other words, incongruity is not so much a matter of \emph{removing} a norm from the listener knowledge base but rather a \emph{violation} of this norm by focusing on exceptional situations. For instance in the story about the four elephants, if we consider that all information is defeasible, then the violation of the rule that elephants are too huge to enter in a car is possible 
by considering exceptional elephants with no huge size. But such a consideration is by itself incongruous by violating the general law. 

We should notice that Definition \ref{disdef} is using a set difference operator in order to check if by removing $\rho$ in the knowledge base $\Sigma$, the statement $\alpha\wedge \beta$ violates $\rho$ and also to check if $\beta$ reveals $\alpha$ when removing $\rho$. The reader might wonder if $\Sigma\setminus \{\rho\}$ should be better replaced by a contraction\footnote{We recall that the contraction of a knowledge base by $\varphi$ is the operation that modifies it in order that it no longer entails $\varphi$. This operation was introduced in a propositional logic setting by  \cite{AGM1985}. More formally, $K$ contracted by $\rho$, denoted K$\dotminus$ $\rho$, can be computed as a revision using $(K \circ$ $\neg\rho$) $\wedge K$ according to Harper's identity (\cite{Gardenfors1988}). 
However, such a modeling does not seem very satisfactory from a cognitive point of view. Indeed a listener willing to understand a joke, is not necessarily inclined  to consider all the consequences of the violation of a norm. It is enough for a listener to find one norm that is violated, independently of the existence of sets of formulas that entail this norm. 
Moreover, as far as we know the contraction by default rules has not been much studied.}, however, when $\rho\in \Delta$, the removal of $\rho$ is in fact not mandatory  for checking incongruity, since the only thing that we would like to know is whether $\rho$ is violated in the context $\alpha\wedge \beta$. When $\rho\in P$, for showing that $\rho$ is violated we should at least remove $\rho$ from $P$ if we want to reason safely (without inconsistency). 

The removal of $\rho\in P$ by mere erasing of $\rho$ from $P$, may have some side effects due to proposition \ref{prop} which forbids a listener to consider formulas that are entailed by other formulas of $P$ as violated in the sense of the first condition of Definition \ref{disdef}. It appears that when this side effect takes place this could be understood as the fact that the listener is reluctant to the intended incongruity of the joke because she or he is not able to apprehend all the inconsistency it creates. When this effect does not occur the listener is able to handle the violation thus to appreciate the joke. This is the case in the two examples below where the norm violated cannot be derived from the rest of the knowledge base. The first one does not involve any default rule and illustrates the violation of a norm represented by a propositional formula. The second one concerns the violation of a norm expressed by a default rule.


\begin{exs}{elephant}
In this example, $\alpha= i\wedge e$, $\beta=e\wedge tt$, 
$\Delta$ is empty, and R2 is taken as the only norm, hence $\varphi \entails^m_\Sigma \psi$ reduces to $P\cup\{\varphi\}\models \psi$ when $P \cup \{\varphi\}$ is consistent.
We have $\{i\wedge e \wedge tt\} \cup P\setminus \{R2\}\models e \wedge \neg h$. 
Moreover, $P\setminus \{R2\}=\{R1,R3,R4\}$ is consistent with $i \wedge e\wedge tt$ and there is only one model satisfying them namely $\omega_0=$ i e tt $\neg$h in which e $\wedge$ $\neg$h holds, negating $\rho=R2=e\to h$. Hence the statement $(i\wedge e, tt\wedge e)$ is violating $R2$. 

Now, the set of models of $tt\wedge e$ satisfying $(P\setminus \{R2\})$ reduces to $\{\omega_0\}$, and $\omega_0\models i\wedge e$, hence the statement $(i\wedge e, tt\wedge e)$ is a revelation in disregard of $R2$ for a listener corresponding to $\Sigma$.
\end{exs}

\begin{exs}{fire} In this example, R1 and R3 are taken as norms, $\alpha=fh$ $\wedge$ sFn, $\beta=$ fh $\wedge$ an. We are going to show that {fh $\wedge$ sFn $\wedge$ an  $\entails^{bo}_{\Sigma\setminus \{R3\}}$ an $\wedge$ ir}. Indeed after removing the rule R3, the rules R1 and R4 have the same priority level in $\Delta$. Note that among the interpretations satisfying fh $\wedge$ sFn $\wedge$ an,    the interpretation {$\omega_1=$ fh an ir dh sFn} is the only one that does not violate any rule, hence it is the only preferred one. {$\omega_1\models$ an $\wedge$ ir}, thus violating $\rho= R3=$ an $\leadsto \neg$ ir. 
    
When removing $R3$ the only preferred model is $\omega_1$ which satisfies $sFn$, hence fh $\wedge$ an$\entails^{bo}_{\Sigma\setminus \{R3\}}$ fh $\wedge$ sFn which means that (fh $\wedge$ sFn, fh $\wedge$ an) is a revelation in disregard of $R3$. 
\end{exs}

As already suggested in introducing incongruity as a supplementary ingredient of humor, the two examples of subsection \ref{jokes} that where only considered as potentially funny (since being surprising and revealing), may also be seen as involving incongruity. 
Indeed, in example \ref{medecinmeme} we could underline the incongruity of the situation (an injured doctor) by the fact that usually doctors are always ready to cure people, which could be encoded by a rule such as  $doctorHimself\leadsto \neg injured$. 
In example \ref{femmechezmedecin}, there is an implicit social principle that is disregarded, namely ``a man should not follow a lady that he does not know (especially in a doctor’s office)’’. More precisely, this rule could be encoded as  $\neg knowEachOther \leadsto \neg together$ which is violated by $\beta$. 

\begin{exs}{femmechezmedecin} If we add the norm $\rho=\neg k \leadsto \neg t$ to $\Delta$ then the statement $(t,r)$ is still surprising, then the statement ($r, t \wedge \neg k$) is violating $\rho$ and it is a revelation in disregard of $\rho$. Thus the story $(t,r,t\wedge \neg k)$ is felt both incongruous and surprising by a listener represented by $(P,\Delta\cup \{\rho\})$.
\end{exs}

As we are going to see in the following remark, using the archetypal example of nonmonotonic reasonning, an exception to a default rule could be understood as incongruous to the extent that this default rule can be regarded as a norm. This is natural since when a new situation is presented by a teller, the listener is first inclined towards considering the situation as normal. 

 \begin{remark} Given a knowledge base $\Sigma=(P,\Delta)$ where $P=\{p\to b\}$ and $\Delta= \{R1=b\leadsto f, R2=p\leadsto \neg f$, $R3=b \wedge a \leadsto p\}$, the statement $(b,p)$ is potentially funny (as shown in Proof of Proposition \ref{propcourte}) and it is incongruous because a typical rule R1 is violated: $p\wedge b\entails^{bo}_{\Sigma\setminus\{R1\}} b \wedge \neg f$. Indeed we are in a case of an exception to rule R1. 
\end{remark}

In this subsection, we have provided a first attempt at modeling incongruity in jokes. We have used the setting introduced in Section \ref{formalisation} where the listener is identified with a a pair $(P,\Delta)$ acknowledging her / his capability of reasoning under incomplete knowledge, leading us to depart from a pure abstract  belief revision framework. This proposal remains preliminary, and there may be different ways to  extend this setting. For instance, we may think about introducing degrees of incongruity, we could also take into account the mixed emotions of the listener such as the shame associated to the violation of norms. Let us also admit that identifying what is a norm may depend from a listener to another this echoes the fact that the same joke cannot be understood in the same way by people with different background.

\subsection{Discussion on the relevance of the proposed model}
Without any doubt, a large number of jokes rely on a  surprising punchline in the sense that the teller presents an incomplete situation that puts the listener on a wrong track at the beginning of the joke, before the punchline reveals the reality of the situation. The fact that such a mechanism is present in many jokes is acknowledged by authors specialized in the study of jokes such as in \cite{Ritchie2018} who himself outlined a modeling of the kind discussed in Section \ref{formalisation}, although in a much less formal way  (\cite{Ritchie1999,Ritchie2002}).

Nevertheless, it is true that there are circumstances other than jokes where we can be surprised at the end, as in detective stories for instance, which are not especially funny in general. 
Indeed in this paper we speak of  ``potentially funny'' jokes, since on the one hand not all listeners laugh at the same jokes, and on the other hand the triggering of laughter also depends in part on the teller. Laughter is a matter of cognitive science which also involves physiological, psychological, sociological and cultural dimensions. 

Still, surprise and revelation remain ingredients that are at work in many jokes as noticed by philosophers and psychologists. But there are other kinds of surprising situations that are not based on incomplete information at the beginning, but rather rely on 
 inconsistency,  as in the  elephants joke.
Then, to escape inconsistency, one has to  ``act  as if'' inconsistency was not there (a known recipe for comical effects), we then speak of incongruity. 
Incongruity can be reinforced if the inconsistency involves a socially forbidden situation,  
for example,  setting the neighbor's house on fire, or 
undressing in front of strangers. 

It is difficult to determine what characterizes the specificity of what leads to laughter, and over the centuries philosophers have only been able to identify a few ingredients, the three main ones being the surprise effect, the incongruity and the superiority. 
Indeed all the jokes considered here exhibit some    incongruity and revelation (and possibly some surprise due to incomplete information), even if  there exist jokes which may be also based on hazardous analogies (see  \cite{DuPr2022} for some examples).

Whatever the case (surprising punchline or incongruity) the joke requires a resolution, this is its revealing side. This modest contribution has proposed  a  formalization of the two ingredients. Interestingly enough, they are directly related to two basic issues in artificial intelligence: incomplete information and inconsistency.



\section{Conclusion}\label{Con}
In this article, dedicated to Philippe Besnard, we have revisited our proposal to see jokes as a matter of surprise and revelation expressed in terms of belief revision. The key ideas of this paper are i) the claim that the capability of the listener of reasoning under incomplete information is necessary for understanding a joke 
 and ii) the notion of incongruity can be integrated in a nonmontonic reasoning framework. Incongruity is seen as a violation of a social norm, the comic effect comes from the fact that the story teller acts as if this norm is not existing. 

This preliminary proposal could still be enriched in different ways. First, one may consider that the comic effect is all the greater as the disregarded social norm is more obvious or mandatory. This means that one may introduce some level of incongruity in relation with the shocking nature of the non-compliance with the involved law. 

As recalled in the introduction, another acknowledged ingredient of humor is superiority. Taking this notion into account would require a more sophisticated framework for representing the communication process between agents that could reason and speak about other agents. This calls for a genuine multiple agent modeling. Such a framework would allow for representing how a teller can take into account what he or she knows about the listener and how the teller can create a sense of complicity with the listener, possibly against a common object of mockery. 
In such a context, principles of good communication (such as \cite{Grice1975}'s maxims) 
can be at stake. This allows for jokes that are incongruous with respect to such principles (like absurd jokes that disregard the principle of always taking the reality into account). 

In general, understanding a story is facilitated by the ability to visualize and put oneself in the place of the characters. This helps to perceive the blatant side of the violation of the law. Imagining a character in a story capable of breaking an obvious law may allow the listener for a kind of ``comic release’’ (from the pressure to follow that law). At the opposite, a story in a universe with people you can hardly identify with has little chance of making you laugh. However, in this context,  it is possible to achieve laughter by helping the listener to identify with the characters. In \cite{Valitutti2016}, geometric forms are moving on a screen where a small circle seems to follow a big one. This scene is shown to an audience that do not laugh at it. The same scene is shown with a fictional  dialog between the shapes involving a man and a woman, then it provokes the laughter of the audience. 
Lastly, a story itself may be more or less funny, but the way of telling it is also important (it may involve incongruity, since the storyteller might disregard some Gricean-like maxims).

In the same way as surprise and revelation are not enough for uniquely characterizing jokes, it is not sure that the addition of incongruity would be sufficient for capturing the comic effect of the joke and enough for characterizing it in a unique way. Indeed \cite{Spencer1860} emphasized that incongruity is not necessarily a source of laughter since it can also provoke wonder, as sometimes in art. 

\section*{Acknowledgements} This paper has greatly benefited from the remarks of the reviewers. 

\bibliography{biblioJoke}

\begin{thebibliography}{}

\bibitem [\protect \citeauthoryear {%
Alchourr{\'o}n%
, G{\"a}rdenfors%
\BCBL {}\ \BBA {} Makinson%
}{%
Alchourr{\'o}n%
\ \protect \BOthers {.}}{%
{\protect \APACyear {1985}}%
}]{%
AGM1985}
\APACinsertmetastar {%
AGM1985}%
\begin{APACrefauthors}%
Alchourr{\'o}n, C\BPBI E.%
, G{\"a}rdenfors, P.%
\BCBL {}\ \BBA {} Makinson, D.%
\end{APACrefauthors}%
\unskip\
\newblock
\APACrefYearMonthDay{1985}{}{}.
\newblock
{\BBOQ}\APACrefatitle {On the logic of theory change: partial meet contraction
  and revision functions} {On the logic of theory change: partial meet
  contraction and revision functions}.{\BBCQ}
\newblock
\APACjournalVolNumPages{The Journal of Symbolic Logic}{50}{2}{510--530}.
\PrintBackRefs{\CurrentBib}

\bibitem [\protect \citeauthoryear {%
Attardo%
\ \BBA {} Raskin%
}{%
Attardo%
\ \BBA {} Raskin%
}{%
{\protect \APACyear {1991}}%
}]{%
AtRa1991}
\APACinsertmetastar {%
AtRa1991}%
\begin{APACrefauthors}%
Attardo, S.%
\BCBT {}\ \BBA {} Raskin, V.%
\end{APACrefauthors}%
\unskip\
\newblock
\APACrefYearMonthDay{1991}{}{}.
\newblock
{\BBOQ}\APACrefatitle {Script theory revis(it)ed: joke similarity and joke
  representation model} {Script theory revis(it)ed: joke similarity and joke
  representation model}.{\BBCQ}
\newblock
\APACjournalVolNumPages{Humor-International Journal of Humor
  Research}{4}{3-4}{293--348}.
\PrintBackRefs{\CurrentBib}

\bibitem [\protect \citeauthoryear {%
Bariaud%
}{%
Bariaud%
}{%
{\protect \APACyear {1983}}%
}]{%
Bariaud1983}
\APACinsertmetastar {%
Bariaud1983}%
\begin{APACrefauthors}%
Bariaud, F.%
\end{APACrefauthors}%
\unskip\
\newblock
\APACrefYear{1983}.
\newblock
\APACrefbtitle {La G{\'e}n{\`e}se de l'Humour chez l'Enfant} {La g{\'e}n{\`e}se
  de l'humour chez l'enfant}.
\newblock
\APACaddressPublisher{}{Presses Universitaires de France, Paris}.
\PrintBackRefs{\CurrentBib}

\bibitem [\protect \citeauthoryear {%
Beattie%
}{%
Beattie%
}{%
{\protect \APACyear {1776}}%
}]{%
Beattie1776}
\APACinsertmetastar {%
Beattie1776}%
\begin{APACrefauthors}%
Beattie, J.%
\end{APACrefauthors}%
\unskip\
\newblock
\APACrefYearMonthDay{1776}{}{}.
\newblock
{\BBOQ}\APACrefatitle {An essay on laughter, and ludicrous composition} {An
  essay on laughter, and ludicrous composition}.{\BBCQ}
\newblock
\APACjournalVolNumPages{Essays. William Creech, Edinburgh. Reprinted by
  Garland, New-York 1971}{}{}{414--456}.
\PrintBackRefs{\CurrentBib}

\bibitem [\protect \citeauthoryear {%
Benferhat%
, Cayrol%
, Dubois%
, Lang%
\BCBL {}\ \BBA {} Prade%
}{%
Benferhat%
\ \protect \BOthers {.}}{%
{\protect \APACyear {1993}}%
}]{%
BCDLP1993}
\APACinsertmetastar {%
BCDLP1993}%
\begin{APACrefauthors}%
Benferhat, S.%
, Cayrol, C.%
, Dubois, D.%
, Lang, J.%
\BCBL {}\ \BBA {} Prade, H.%
\end{APACrefauthors}%
\unskip\
\newblock
\APACrefYearMonthDay{1993}{}{}.
\newblock
{\BBOQ}\APACrefatitle {Inconsistency management and prioritized syntax-based
  entailment} {Inconsistency management and prioritized syntax-based
  entailment}.{\BBCQ}
\newblock
\BIn{} R.~Bajcsy\ (\BED), \APACrefbtitle {Proc. 13th Int. Joint Conf. on
  Artificial Intelligence. {C}hamb{\'{e}}ry, {A}ug. 28 - {S}ept. 3} {Proc. 13th
  int. joint conf. on artificial intelligence. {C}hamb{\'{e}}ry, {A}ug. 28 -
  {S}ept. 3}\ (\BPGS\ 640--647).
\newblock
\APACaddressPublisher{}{Morgan Kaufmann}.
\PrintBackRefs{\CurrentBib}

\bibitem [\protect \citeauthoryear {%
Benferhat%
, Dubois%
\BCBL {}\ \BBA {} Prade%
}{%
Benferhat%
\ \protect \BOthers {.}}{%
{\protect \APACyear {1992}}%
}]{%
BDP1992}
\APACinsertmetastar {%
BDP1992}%
\begin{APACrefauthors}%
Benferhat, S.%
, Dubois, D.%
\BCBL {}\ \BBA {} Prade, H.%
\end{APACrefauthors}%
\unskip\
\newblock
\APACrefYearMonthDay{1992}{}{}.
\newblock
{\BBOQ}\APACrefatitle {Representing default rules in possibilistic logic}
  {Representing default rules in possibilistic logic}.{\BBCQ}
\newblock
\BIn{} B.~Nebel, C.~Rich\BCBL {}\ \BBA {} W\BPBI R.~Swartout\ (\BEDS),
  \APACrefbtitle {Proc. 3rd Int. Conf. on Principles of Knowledge
  Representation and Reasoning ({KR}'92). {C}ambridge, {MA}, {O}ct. 25-29}
  {Proc. 3rd int. conf. on principles of knowledge representation and reasoning
  ({KR}'92). {C}ambridge, {MA}, {O}ct. 25-29}\ (\BPGS\ 673--684).
\newblock
\APACaddressPublisher{}{Morgan Kaufmann}.
\PrintBackRefs{\CurrentBib}

\bibitem [\protect \citeauthoryear {%
Benferhat%
, Dubois%
\BCBL {}\ \BBA {} Prade%
}{%
Benferhat%
\ \protect \BOthers {.}}{%
{\protect \APACyear {1997}}%
}]{%
BDP1997}
\APACinsertmetastar {%
BDP1997}%
\begin{APACrefauthors}%
Benferhat, S.%
, Dubois, D.%
\BCBL {}\ \BBA {} Prade, H.%
\end{APACrefauthors}%
\unskip\
\newblock
\APACrefYearMonthDay{1997}{}{}.
\newblock
{\BBOQ}\APACrefatitle {Nonmonotonic reasoning, conditional objects and
  possibility theory} {Nonmonotonic reasoning, conditional objects and
  possibility theory}.{\BBCQ}
\newblock
\APACjournalVolNumPages{Artif. Intell.}{92}{1-2}{259--276}.
\PrintBackRefs{\CurrentBib}

\bibitem [\protect \citeauthoryear {%
Benferhat%
, Dubois%
, Prade%
\BCBL {}\ \BBA {} Williams%
}{%
Benferhat%
\ \protect \BOthers {.}}{%
{\protect \APACyear {2002}}%
}]{%
BDPW2002}
\APACinsertmetastar {%
BDPW2002}%
\begin{APACrefauthors}%
Benferhat, S.%
, Dubois, D.%
, Prade, H.%
\BCBL {}\ \BBA {} Williams, M\BHBI A.%
\end{APACrefauthors}%
\unskip\
\newblock
\APACrefYearMonthDay{2002}{}{}.
\newblock
{\BBOQ}\APACrefatitle {A practical approach to revising prioritized knowledge
  bases} {A practical approach to revising prioritized knowledge bases}.{\BBCQ}
\newblock
\APACjournalVolNumPages{Studia Logica}{70}{1}{105--130}.
\PrintBackRefs{\CurrentBib}

\bibitem [\protect \citeauthoryear {%
Bergson%
}{%
Bergson%
}{%
{\protect \APACyear {1900}}%
}]{%
Bergson1900}
\APACinsertmetastar {%
Bergson1900}%
\begin{APACrefauthors}%
Bergson, H.%
\end{APACrefauthors}%
\unskip\
\newblock
\APACrefYear{1900}.
\newblock
\APACrefbtitle {Le Rire: Essai sur la Signification du Comique} {Le rire: Essai
  sur la signification du comique}.
\newblock
\APACaddressPublisher{}{Felix Alcan, Paris}.
\PrintBackRefs{\CurrentBib}

\bibitem [\protect \citeauthoryear {%
Chapman%
\ \BBA {} Foot%
}{%
Chapman%
\ \BBA {} Foot%
}{%
{\protect \APACyear {1976}}%
}]{%
ChapmanFoot1976}
\APACinsertmetastar {%
ChapmanFoot1976}%
\begin{APACrefauthors}%
Chapman, A\BPBI J.%
\BCBT {}\ \BBA {} Foot, H\BPBI C.%
\end{APACrefauthors}%
\ (\BEDS).
\unskip\
\newblock
\APACrefYear{1976}.
\newblock
\APACrefbtitle {Humour and Laughter: Theory, Research and Applications} {Humour
  and laughter: Theory, research and applications}.
\newblock
\APACaddressPublisher{}{John Wiley \& Sons}.
\PrintBackRefs{\CurrentBib}

\bibitem [\protect \citeauthoryear {%
Delage%
}{%
Delage%
}{%
{\protect \APACyear {1919}}%
}]{%
Delage1919}
\APACinsertmetastar {%
Delage1919}%
\begin{APACrefauthors}%
Delage, Y.%
\end{APACrefauthors}%
\unskip\
\newblock
\APACrefYearMonthDay{1919}{}{}.
\newblock
{\BBOQ}\APACrefatitle {Sur la nature du comique} {Sur la nature du
  comique}.{\BBCQ}
\newblock
\APACjournalVolNumPages{La Revue du Mois}{20}{}{337-354}.
\PrintBackRefs{\CurrentBib}

\bibitem [\protect \citeauthoryear {%
Descartes%
}{%
Descartes%
}{%
{\protect \APACyear {1649}}%
}]{%
Descartes1649}
\APACinsertmetastar {%
Descartes1649}%
\begin{APACrefauthors}%
Descartes, R.%
\end{APACrefauthors}%
\unskip\
\newblock
\APACrefYear{1649}.
\newblock
\APACrefbtitle {Les passions de l'{\^a}me} {Les passions de l'{\^a}me}.
\newblock
\APACaddressPublisher{}{Henry Le Gras, Paris}.
\PrintBackRefs{\CurrentBib}

\bibitem [\protect \citeauthoryear {%
Dictionary%
}{%
Dictionary%
}{%
{\protect \APACyear {2023}}%
}]{%
incongruous}
\APACinsertmetastar {%
incongruous}%
\begin{APACrefauthors}%
Dictionary, M\BHBI W.%
\end{APACrefauthors}%
\unskip\
\newblock
\APACrefYearMonthDay{2023}{}{}.
\newblock
\APACrefbtitle {Incongruous.} {Incongruous.}
\newblock
\begin{APACrefURL} \url{https://www.merriam-webster.com/dictionary/incongruous}
  \end{APACrefURL}
\PrintBackRefs{\CurrentBib}

\bibitem [\protect \citeauthoryear {%
Dubois%
\ \BBA {} Prade%
}{%
Dubois%
\ \BBA {} Prade%
}{%
{\protect \APACyear {1988}}%
}]{%
DuPr1988}
\APACinsertmetastar {%
DuPr1988}%
\begin{APACrefauthors}%
Dubois, D.%
\BCBT {}\ \BBA {} Prade, H.%
\end{APACrefauthors}%
\unskip\
\newblock
\APACrefYear{1988}.
\newblock
\APACrefbtitle {Possibility Theory. An Approach to Computerized Processing of
  Uncertainty} {Possibility theory. an approach to computerized processing of
  uncertainty}.
\newblock
\APACaddressPublisher{New York and London}{Plenum Press}.
\newblock
\APACrefnote{with the collaboration of H. Farreny, R. Martin-Clouaire and C.
  Testemale}
\PrintBackRefs{\CurrentBib}

\bibitem [\protect \citeauthoryear {%
Dubois%
\ \BBA {} Prade%
}{%
Dubois%
\ \BBA {} Prade%
}{%
{\protect \APACyear {1991}}%
}]{%
DuPr1991}
\APACinsertmetastar {%
DuPr1991}%
\begin{APACrefauthors}%
Dubois, D.%
\BCBT {}\ \BBA {} Prade, H.%
\end{APACrefauthors}%
\unskip\
\newblock
\APACrefYearMonthDay{1991}{}{}.
\newblock
{\BBOQ}\APACrefatitle {Epistemic entrenchment and possibilistic logic}
  {Epistemic entrenchment and possibilistic logic}.{\BBCQ}
\newblock
\APACjournalVolNumPages{Artif. Intell.}{50}{2}{223--239}.
\PrintBackRefs{\CurrentBib}

\bibitem [\protect \citeauthoryear {%
{Dupin de Saint-Cyr}%
\ \BBA {} Prade%
}{%
{Dupin de Saint-Cyr}%
\ \BBA {} Prade%
}{%
{\protect \APACyear {2020}}%
}]{%
DuPr2020}
\APACinsertmetastar {%
DuPr2020}%
\begin{APACrefauthors}%
{Dupin de Saint-Cyr}, F.%
\BCBT {}\ \BBA {} Prade, H.%
\end{APACrefauthors}%
\unskip\
\newblock
\APACrefYearMonthDay{2020}{}{}.
\newblock
{\BBOQ}\APACrefatitle {Jokes and belief revision} {Jokes and belief
  revision}.{\BBCQ}
\newblock
\BIn{} D.~Calvanese, E.~Erdem\BCBL {}\ \BBA {} M.~Thielscher\ (\BEDS),
  \APACrefbtitle {Proc. 17th Int. Conf. on Principles of Knowledge
  Representation and Reasoning ({KR}'20), Rhodes, Sept. 12-18} {Proc. 17th int.
  conf. on principles of knowledge representation and reasoning ({KR}'20),
  rhodes, sept. 12-18}\ (\BPGS\ 336--340).
\PrintBackRefs{\CurrentBib}

\bibitem [\protect \citeauthoryear {%
{Dupin de Saint-Cyr}%
\ \BBA {} Prade%
}{%
{Dupin de Saint-Cyr}%
\ \BBA {} Prade%
}{%
{\protect \APACyear {2022}}%
}]{%
DuPr2022}
\APACinsertmetastar {%
DuPr2022}%
\begin{APACrefauthors}%
{Dupin de Saint-Cyr}, F.%
\BCBT {}\ \BBA {} Prade, H.%
\end{APACrefauthors}%
\unskip\
\newblock
\APACrefYearMonthDay{2022}{}{}.
\newblock
{\BBOQ}\APACrefatitle {{La compréhension des histoires drôles : une affaire
  de révision de croyances}} {{La compréhension des histoires drôles : une
  affaire de révision de croyances}}.{\BBCQ}
\newblock
\APACjournalVolNumPages{Revue Ouverte d'Intelligence Artificielle
  (ROIA)}{3}{}{253--285}.
\PrintBackRefs{\CurrentBib}

\bibitem [\protect \citeauthoryear {%
Fagin%
\ \BBA {} Halpern%
}{%
Fagin%
\ \BBA {} Halpern%
}{%
{\protect \APACyear {1994}}%
}]{%
FaHa1994}
\APACinsertmetastar {%
FaHa1994}%
\begin{APACrefauthors}%
Fagin, R.%
\BCBT {}\ \BBA {} Halpern, J\BPBI Y.%
\end{APACrefauthors}%
\unskip\
\newblock
\APACrefYearMonthDay{1994}{}{}.
\newblock
{\BBOQ}\APACrefatitle {Reasoning about knowledge and probability} {Reasoning
  about knowledge and probability}.{\BBCQ}
\newblock
\APACjournalVolNumPages{J. of the ACM (JACM)}{41}{2}{340--367}.
\PrintBackRefs{\CurrentBib}

\bibitem [\protect \citeauthoryear {%
Fleet%
}{%
Fleet%
}{%
{\protect \APACyear {1890}}%
}]{%
Fleet1890}
\APACinsertmetastar {%
Fleet1890}%
\begin{APACrefauthors}%
Fleet, F\BPBI R.%
\end{APACrefauthors}%
\unskip\
\newblock
\APACrefYear{1890}.
\newblock
\APACrefbtitle {A Theory of Wit and Humour} {A theory of wit and humour}.
\newblock
\APACaddressPublisher{}{Remington and Company}.
\PrintBackRefs{\CurrentBib}

\bibitem [\protect \citeauthoryear {%
G\"ardenfors%
}{%
G\"ardenfors%
}{%
{\protect \APACyear {1988}}%
}]{%
Gardenfors1988}
\APACinsertmetastar {%
Gardenfors1988}%
\begin{APACrefauthors}%
G\"ardenfors, P.%
\end{APACrefauthors}%
\unskip\
\newblock
\APACrefYear{1988}.
\newblock
\APACrefbtitle {Knowledge in Flux: Modeling the Dynamics of Epistemic States}
  {Knowledge in flux: Modeling the dynamics of epistemic states}.
\newblock
\APACaddressPublisher{}{MIT Press, 1988}.
\newblock
\APACrefnote{2nd ed., College Publications, 2008}
\PrintBackRefs{\CurrentBib}

\bibitem [\protect \citeauthoryear {%
G{\"{a}}rdenfors%
}{%
G{\"{a}}rdenfors%
}{%
{\protect \APACyear {1990}}%
}]{%
Gardenfors1990}
\APACinsertmetastar {%
Gardenfors1990}%
\begin{APACrefauthors}%
G{\"{a}}rdenfors, P.%
\end{APACrefauthors}%
\unskip\
\newblock
\APACrefYearMonthDay{1990}{}{}.
\newblock
{\BBOQ}\APACrefatitle {Belief revision and nonmonotonic logic: Two sides of the
  same coin?} {Belief revision and nonmonotonic logic: Two sides of the same
  coin?}{\BBCQ}
\newblock
\BIn{} \APACrefbtitle {Proceedings of the 9th European Conference on Artificial
  Intelligence ({ECAI}'90), {S}tockholm} {Proceedings of the 9th european
  conference on artificial intelligence ({ECAI}'90), {S}tockholm}\ (\BPGS\
  768--773).
\PrintBackRefs{\CurrentBib}

\bibitem [\protect \citeauthoryear {%
Grice%
}{%
Grice%
}{%
{\protect \APACyear {1975}}%
}]{%
Grice1975}
\APACinsertmetastar {%
Grice1975}%
\begin{APACrefauthors}%
Grice, P.%
\end{APACrefauthors}%
\unskip\
\newblock
\APACrefYearMonthDay{1975}{}{}.
\newblock
{\BBOQ}\APACrefatitle {Logic and conversation} {Logic and conversation}.{\BBCQ}
\newblock
\BIn{} P.~Cole\ \BBA {} J.~Morgan\ (\BEDS), \APACrefbtitle {Syntax and
  Semantics. Vol. 3: Speech Acts} {Syntax and semantics. vol. 3: Speech acts}\
  (\BPG~41–-58).
\newblock
\APACaddressPublisher{}{Academic Press, New York}.
\PrintBackRefs{\CurrentBib}

\bibitem [\protect \citeauthoryear {%
Grove%
}{%
Grove%
}{%
{\protect \APACyear {1988}}%
}]{%
Grove1988}
\APACinsertmetastar {%
Grove1988}%
\begin{APACrefauthors}%
Grove, A.%
\end{APACrefauthors}%
\unskip\
\newblock
\APACrefYearMonthDay{1988}{}{}.
\newblock
{\BBOQ}\APACrefatitle {Two modellings for theory change} {Two modellings for
  theory change}.{\BBCQ}
\newblock
\APACjournalVolNumPages{Journal of Philosophical Logic}{}{}{157--170}.
\PrintBackRefs{\CurrentBib}

\bibitem [\protect \citeauthoryear {%
Hobbes%
}{%
Hobbes%
}{%
{\protect \APACyear {1651}}%
}]{%
Hobbes}
\APACinsertmetastar {%
Hobbes}%
\begin{APACrefauthors}%
Hobbes, T.%
\end{APACrefauthors}%
\unskip\
\newblock
\APACrefYear{1651}.
\newblock
\APACrefbtitle {Leviathan} {Leviathan}.
\newblock
\APACaddressPublisher{}{Crooke, London Press}.
\PrintBackRefs{\CurrentBib}

\bibitem [\protect \citeauthoryear {%
Hollingworth%
}{%
Hollingworth%
}{%
{\protect \APACyear {1911}}%
}]{%
Hollingworth1911}
\APACinsertmetastar {%
Hollingworth1911}%
\begin{APACrefauthors}%
Hollingworth, H\BPBI L.%
\end{APACrefauthors}%
\unskip\
\newblock
\APACrefYearMonthDay{1911}{}{}.
\newblock
{\BBOQ}\APACrefatitle {Experimental studies in judgment: Judgments of the
  comic.} {Experimental studies in judgment: Judgments of the comic.}{\BBCQ}
\newblock
\APACjournalVolNumPages{Psychological Review}{18}{2}{132}.
\PrintBackRefs{\CurrentBib}

\bibitem [\protect \citeauthoryear {%
Kant%
}{%
Kant%
}{%
{\protect \APACyear {2001}}%
}]{%
Kant1790}
\APACinsertmetastar {%
Kant1790}%
\begin{APACrefauthors}%
Kant, E.%
\end{APACrefauthors}%
\unskip\
\newblock
\APACrefYear{2001}.
\newblock
\APACrefbtitle {Critique of the Power of Judgment} {Critique of the power of
  judgment}.
\newblock
\APACaddressPublisher{}{Cambridge University Press}.
\newblock
\APACrefnote{Original title: Kritik der Urteilskraft, Lagarde und Friederich,
  Berlin u. Libau, 1790}
\PrintBackRefs{\CurrentBib}

\bibitem [\protect \citeauthoryear {%
Katsuno%
\ \BBA {} Mendelzon%
}{%
Katsuno%
\ \BBA {} Mendelzon%
}{%
{\protect \APACyear {1991}}%
}]{%
KaMe1991}
\APACinsertmetastar {%
KaMe1991}%
\begin{APACrefauthors}%
Katsuno, H.%
\BCBT {}\ \BBA {} Mendelzon, A\BPBI O.%
\end{APACrefauthors}%
\unskip\
\newblock
\APACrefYearMonthDay{1991}{}{}.
\newblock
{\BBOQ}\APACrefatitle {Propositional knowledge base revision and minimal
  change} {Propositional knowledge base revision and minimal change}.{\BBCQ}
\newblock
\APACjournalVolNumPages{Artificial Intelligence}{52}{3}{263--294}.
\PrintBackRefs{\CurrentBib}

\bibitem [\protect \citeauthoryear {%
Katsuno%
\ \BBA {} Mendelzon%
}{%
Katsuno%
\ \BBA {} Mendelzon%
}{%
{\protect \APACyear {1992}}%
}]{%
KaMe1991b}
\APACinsertmetastar {%
KaMe1991b}%
\begin{APACrefauthors}%
Katsuno, H.%
\BCBT {}\ \BBA {} Mendelzon, A\BPBI O.%
\end{APACrefauthors}%
\unskip\
\newblock
\APACrefYearMonthDay{1992}{}{}.
\newblock
{\BBOQ}\APACrefatitle {On the difference between updating a knowledge base and
  revising it} {On the difference between updating a knowledge base and
  revising it}.{\BBCQ}
\newblock
\BIn{} P.~G{\"{a}}rdenfors\ (\BED), \APACrefbtitle {Belief revision} {Belief
  revision}\ (\BPG~183-203).
\newblock
\APACaddressPublisher{}{Cambridge University Press}.
\PrintBackRefs{\CurrentBib}

\bibitem [\protect \citeauthoryear {%
Keith-Spiegel%
}{%
Keith-Spiegel%
}{%
{\protect \APACyear {1972}}%
}]{%
Keith-Spiegel1972}
\APACinsertmetastar {%
Keith-Spiegel1972}%
\begin{APACrefauthors}%
Keith-Spiegel, P.%
\end{APACrefauthors}%
\unskip\
\newblock
\APACrefYearMonthDay{1972}{}{}.
\newblock
{\BBOQ}\APACrefatitle {Early conceptions of humor: Varieties and issues} {Early
  conceptions of humor: Varieties and issues}.{\BBCQ}
\newblock
\BIn{} J\BPBI H.~Goldstein\ (\BED), \APACrefbtitle {The Psychology of Humor:
  Theoretical Perspectives and Empirical Issues} {The psychology of humor:
  Theoretical perspectives and empirical issues}\ (\BPG~3-39).
\newblock
\APACaddressPublisher{}{Academic Press New York}.
\PrintBackRefs{\CurrentBib}

\bibitem [\protect \citeauthoryear {%
Konieczny%
\ \BBA {} Pino~P{\'e}rez%
}{%
Konieczny%
\ \BBA {} Pino~P{\'e}rez%
}{%
{\protect \APACyear {2000}}%
}]{%
KoPP00}
\APACinsertmetastar {%
KoPP00}%
\begin{APACrefauthors}%
Konieczny, S.%
\BCBT {}\ \BBA {} Pino~P{\'e}rez, R.%
\end{APACrefauthors}%
\unskip\
\newblock
\APACrefYearMonthDay{2000}{}{}.
\newblock
{\BBOQ}\APACrefatitle {A framework for iterated revision} {A framework for
  iterated revision}.{\BBCQ}
\newblock
\APACjournalVolNumPages{J. Applied Non-Class. Logics}{10}{}{339--367}.
\PrintBackRefs{\CurrentBib}

\bibitem [\protect \citeauthoryear {%
Latour%
}{%
Latour%
}{%
{\protect \APACyear {1956}}%
}]{%
Latour1956}
\APACinsertmetastar {%
Latour1956}%
\begin{APACrefauthors}%
Latour, M.%
\end{APACrefauthors}%
\unskip\
\newblock
\APACrefYear{1956}.
\newblock
\APACrefbtitle {Le Problème du Rire et du Réel. Quatre Essais} {Le problème
  du rire et du réel. quatre essais}.
\newblock
\APACaddressPublisher{}{Presses Universitaires De France, Paris}.
\PrintBackRefs{\CurrentBib}

\bibitem [\protect \citeauthoryear {%
{L{\'e}a Somb{\'e} (research group)}%
}{%
{L{\'e}a Somb{\'e} (research group)}%
}{%
{\protect \APACyear {1994}}%
}]{%
Sombe1994}
\APACinsertmetastar {%
Sombe1994}%
\begin{APACrefauthors}%
{L{\'e}a Somb{\'e} (research group)}.%
\end{APACrefauthors}%
\unskip\
\newblock
\APACrefYearMonthDay{1994}{}{}.
\newblock
{\BBOQ}\APACrefatitle {A glance at revision and updating in knowledge bases} {A
  glance at revision and updating in knowledge bases}.{\BBCQ}
\newblock
\APACjournalVolNumPages{Int. J. of Intelligent Systems}{9}{1}{1--27}.
\PrintBackRefs{\CurrentBib}

\bibitem [\protect \citeauthoryear {%
Lehmann%
}{%
Lehmann%
}{%
{\protect \APACyear {1995}}%
}]{%
Lehmann1995}
\APACinsertmetastar {%
Lehmann1995}%
\begin{APACrefauthors}%
Lehmann, D.%
\end{APACrefauthors}%
\unskip\
\newblock
\APACrefYearMonthDay{1995}{}{}.
\newblock
{\BBOQ}\APACrefatitle {Belief revision, revised} {Belief revision,
  revised}.{\BBCQ}
\newblock
\BIn{} \APACrefbtitle {Proc. 14th int. joint conf. on Artificial
  intelligence-Volume 2} {Proc. 14th int. joint conf. on artificial
  intelligence-volume 2}\ (\BPGS\ 1534--1540).
\PrintBackRefs{\CurrentBib}

\bibitem [\protect \citeauthoryear {%
Lorini%
\ \BBA {} Castelfranchi%
}{%
Lorini%
\ \BBA {} Castelfranchi%
}{%
{\protect \APACyear {2007}}%
}]{%
LoCa2007}
\APACinsertmetastar {%
LoCa2007}%
\begin{APACrefauthors}%
Lorini, E.%
\BCBT {}\ \BBA {} Castelfranchi, C.%
\end{APACrefauthors}%
\unskip\
\newblock
\APACrefYearMonthDay{2007}{}{}.
\newblock
{\BBOQ}\APACrefatitle {The cognitive structure of surprise: looking for basic
  principles} {The cognitive structure of surprise: looking for basic
  principles}.{\BBCQ}
\newblock
\APACjournalVolNumPages{Topoi}{26}{1}{133--149}.
\PrintBackRefs{\CurrentBib}

\bibitem [\protect \citeauthoryear {%
Maier%
}{%
Maier%
}{%
{\protect \APACyear {1932}}%
}]{%
Maier1932}
\APACinsertmetastar {%
Maier1932}%
\begin{APACrefauthors}%
Maier, N\BPBI R\BPBI F.%
\end{APACrefauthors}%
\unskip\
\newblock
\APACrefYearMonthDay{1932}{}{}.
\newblock
{\BBOQ}\APACrefatitle {A Gestalt theory of humour} {A gestalt theory of
  humour}.{\BBCQ}
\newblock
\APACjournalVolNumPages{British Journal of Psychology}{23}{1}{69-74}.
\PrintBackRefs{\CurrentBib}

\bibitem [\protect \citeauthoryear {%
Nègre%
}{%
Nègre%
}{%
{\protect \APACyear {1970}}%
}]{%
Negre1970}
\APACinsertmetastar {%
Negre1970}%
\begin{APACrefauthors}%
Nègre, H.%
\end{APACrefauthors}%
\unskip\
\newblock
\APACrefYear{1970}.
\newblock
\APACrefbtitle {Dictionnaire des Histoires Dr\^oles} {Dictionnaire des
  histoires dr\^oles}.
\newblock
\APACaddressPublisher{}{Fayard}.
\PrintBackRefs{\CurrentBib}

\bibitem [\protect \citeauthoryear {%
Pagnol%
}{%
Pagnol%
}{%
{\protect \APACyear {1947}}%
}]{%
Pagnol47}
\APACinsertmetastar {%
Pagnol47}%
\begin{APACrefauthors}%
Pagnol, M.%
\end{APACrefauthors}%
\unskip\
\newblock
\APACrefYear{1947}.
\newblock
\APACrefbtitle {Notes sur le Rire} {Notes sur le rire}.
\newblock
\APACaddressPublisher{}{Editions Nagel, Paris}.
\PrintBackRefs{\CurrentBib}

\bibitem [\protect \citeauthoryear {%
Pagnucco%
}{%
Pagnucco%
}{%
{\protect \APACyear {1996}}%
}]{%
Pagnucco96}
\APACinsertmetastar {%
Pagnucco96}%
\begin{APACrefauthors}%
Pagnucco, M.%
\end{APACrefauthors}%
\unskip\
\newblock
\APACrefYear{1996}.
\newblock
\APACrefbtitle {The Role of Abductive Reasoning within the Process of Belief
  Revision} {The role of abductive reasoning within the process of belief
  revision}.
\newblock
\APACaddressPublisher{}{Univ. Sydney Dissertation}.
\PrintBackRefs{\CurrentBib}

\bibitem [\protect \citeauthoryear {%
Pearl%
}{%
Pearl%
}{%
{\protect \APACyear {1990}}%
}]{%
Pearl1990}
\APACinsertmetastar {%
Pearl1990}%
\begin{APACrefauthors}%
Pearl, J.%
\end{APACrefauthors}%
\unskip\
\newblock
\APACrefYearMonthDay{1990}{}{}.
\newblock
{\BBOQ}\APACrefatitle {System {Z}: A natural ordering of defaults with
  tractable applications to nonmonotonic reasoning} {System {Z}: A natural
  ordering of defaults with tractable applications to nonmonotonic
  reasoning}.{\BBCQ}
\newblock
\BIn{} \APACrefbtitle {Proc. 3rd Conf. on Theoretical Aspects of Reasoning
  about Knowledge} {Proc. 3rd conf. on theoretical aspects of reasoning about
  knowledge}\ (\BPGS\ 121--135).
\PrintBackRefs{\CurrentBib}

\bibitem [\protect \citeauthoryear {%
Raccah%
}{%
Raccah%
}{%
{\protect \APACyear {2015}}%
}]{%
Raccah2015}
\APACinsertmetastar {%
Raccah2015}%
\begin{APACrefauthors}%
Raccah, P\BHBI Y.%
\end{APACrefauthors}%
\unskip\
\newblock
\APACrefYearMonthDay{2015}{}{}.
\newblock
{\BBOQ}\APACrefatitle {Humour et m{\'e}taphore: quelques {\'e}l{\'e}ments d'une
  analogie pour la construction d'un sens inattendu: Illustration sur un corpus
  de citations de {G}eorge {B}ernard {S}haw} {Humour et m{\'e}taphore: quelques
  {\'e}l{\'e}ments d'une analogie pour la construction d'un sens inattendu:
  Illustration sur un corpus de citations de {G}eorge {B}ernard {S}haw}.{\BBCQ}
\newblock
\APACjournalVolNumPages{Revue de Sémantique et Pragmatique, Presses de l'Univ.
  d'Orléans}{}{}{}.
\newblock
\begin{APACrefURL} \url{<halshs-01099484>} \end{APACrefURL}
\PrintBackRefs{\CurrentBib}

\bibitem [\protect \citeauthoryear {%
Ritchie%
}{%
Ritchie%
}{%
{\protect \APACyear {1999}}%
}]{%
Ritchie1999}
\APACinsertmetastar {%
Ritchie1999}%
\begin{APACrefauthors}%
Ritchie, G.%
\end{APACrefauthors}%
\unskip\
\newblock
\APACrefYearMonthDay{1999}{}{}.
\newblock
{\BBOQ}\APACrefatitle {Developing the incongruity-resolution theory}
  {Developing the incongruity-resolution theory}.{\BBCQ}
\newblock
\BIn{} \APACrefbtitle {Proc. of the {AISB} Symposium on Creative Language:
  Stories and Humour, {E}dinburgh} {Proc. of the {AISB} symposium on creative
  language: Stories and humour, {E}dinburgh}\ (\BPG~78–85).
\PrintBackRefs{\CurrentBib}

\bibitem [\protect \citeauthoryear {%
Ritchie%
}{%
Ritchie%
}{%
{\protect \APACyear {2002}}%
}]{%
Ritchie2002}
\APACinsertmetastar {%
Ritchie2002}%
\begin{APACrefauthors}%
Ritchie, G.%
\end{APACrefauthors}%
\unskip\
\newblock
\APACrefYearMonthDay{2002}{}{}.
\newblock
{\BBOQ}\APACrefatitle {The structure of forced reinterpretation jokes} {The
  structure of forced reinterpretation jokes}.{\BBCQ}
\newblock
\BIn{} O.~Stock, C.~Strapparava\BCBL {}\ \BBA {} A.~Nijholt\ (\BEDS),
  \APACrefbtitle {Proc. 20th {T}wente Workshop on Language Technology
  {(TWLT20)}: {The April Fools’ Day} Workshop on Computation Humour,
  {T}rento, {A}pr. 15-16} {Proc. 20th {T}wente workshop on language technology
  {(TWLT20)}: {The April Fools’ Day} workshop on computation humour,
  {T}rento, {A}pr. 15-16}\ (\BPG~47-56).
\PrintBackRefs{\CurrentBib}

\bibitem [\protect \citeauthoryear {%
Ritchie%
}{%
Ritchie%
}{%
{\protect \APACyear {2018}}%
}]{%
Ritchie2018}
\APACinsertmetastar {%
Ritchie2018}%
\begin{APACrefauthors}%
Ritchie, G.%
\end{APACrefauthors}%
\unskip\
\newblock
\APACrefYear{2018}.
\newblock
\APACrefbtitle {The Comprehension of Jokes: A Cognitive Science Framework} {The
  comprehension of jokes: A cognitive science framework}.
\newblock
\APACaddressPublisher{}{CRC Press}.
\PrintBackRefs{\CurrentBib}

\bibitem [\protect \citeauthoryear {%
Schopenhauer%
}{%
Schopenhauer%
}{%
{\protect \APACyear {1819}}%
}]{%
Schopenhauer1819}
\APACinsertmetastar {%
Schopenhauer1819}%
\begin{APACrefauthors}%
Schopenhauer, A.%
\end{APACrefauthors}%
\unskip\
\newblock
\APACrefYear{1819}.
\newblock
\APACrefbtitle {The World as Will and Representation} {The world as will and
  representation}.
\newblock
\APACaddressPublisher{}{English edition 1969, Peter Smith Publisher}.
\newblock
\APACrefnote{Original title: {D}ie {W}elt als {W}ille und {V}orstellung}
\PrintBackRefs{\CurrentBib}

\bibitem [\protect \citeauthoryear {%
Sen%
}{%
Sen%
}{%
{\protect \APACyear {1971}}%
}]{%
Sen1971}
\APACinsertmetastar {%
Sen1971}%
\begin{APACrefauthors}%
Sen, A\BPBI K.%
\end{APACrefauthors}%
\unskip\
\newblock
\APACrefYearMonthDay{1971}{}{}.
\newblock
{\BBOQ}\APACrefatitle {Choice functions and revealed preference} {Choice
  functions and revealed preference}.{\BBCQ}
\newblock
\APACjournalVolNumPages{The Review of Economic Studies}{38}{3}{307--317}.
\PrintBackRefs{\CurrentBib}

\bibitem [\protect \citeauthoryear {%
Shackle%
}{%
Shackle%
}{%
{\protect \APACyear {1961}}%
}]{%
Shackle1961}
\APACinsertmetastar {%
Shackle1961}%
\begin{APACrefauthors}%
Shackle, G\BPBI L\BPBI S.%
\end{APACrefauthors}%
\unskip\
\newblock
\APACrefYear{1961}.
\newblock
\APACrefbtitle {Decision, Order and Time in Human Affairs} {Decision, order and
  time in human affairs}.
\newblock
\APACaddressPublisher{UK}{(2nd edition), Cambridge University Press}.
\PrintBackRefs{\CurrentBib}

\bibitem [\protect \citeauthoryear {%
Shultz%
}{%
Shultz%
}{%
{\protect \APACyear {1976}}%
}]{%
Shultz1976}
\APACinsertmetastar {%
Shultz1976}%
\begin{APACrefauthors}%
Shultz, T\BPBI R.%
\end{APACrefauthors}%
\unskip\
\newblock
\APACrefYearMonthDay{1976}{}{}.
\newblock
{\BBOQ}\APACrefatitle {A cognitive-developmental analysis of humour} {A
  cognitive-developmental analysis of humour}.{\BBCQ}
\newblock
\BIn{} A\BPBI J.~Chapman\ \BBA {} H\BPBI C.~Foot\ (\BEDS), \APACrefbtitle
  {Humour and Laughter: Theory, Research and Applications} {Humour and
  laughter: Theory, research and applications}\ (\BPGS\ 11--36).
\newblock
\APACaddressPublisher{}{John Wiley \& Sons}.
\PrintBackRefs{\CurrentBib}

\bibitem [\protect \citeauthoryear {%
Spencer%
}{%
Spencer%
}{%
{\protect \APACyear {1860}}%
}]{%
Spencer1860}
\APACinsertmetastar {%
Spencer1860}%
\begin{APACrefauthors}%
Spencer, H.%
\end{APACrefauthors}%
\unskip\
\newblock
\APACrefYearMonthDay{1860}{}{}.
\newblock
{\BBOQ}\APACrefatitle {The physiology of laughter} {The physiology of
  laughter}.{\BBCQ}
\newblock
\APACjournalVolNumPages{Macmillan's Magazine}{1}{}{395-402}.
\PrintBackRefs{\CurrentBib}

\bibitem [\protect \citeauthoryear {%
Suls%
}{%
Suls%
}{%
{\protect \APACyear {1972}}%
}]{%
Suls1972}
\APACinsertmetastar {%
Suls1972}%
\begin{APACrefauthors}%
Suls, J\BPBI M.%
\end{APACrefauthors}%
\unskip\
\newblock
\APACrefYearMonthDay{1972}{}{}.
\newblock
{\BBOQ}\APACrefatitle {A two-stage model for the appreciation of jokes and
  cartoons: An information-processing analysis} {A two-stage model for the
  appreciation of jokes and cartoons: An information-processing
  analysis}.{\BBCQ}
\newblock
\APACjournalVolNumPages{The psychology of humor: Theoretical perspectives and
  empirical issues}{1}{}{81--100}.
\PrintBackRefs{\CurrentBib}

\bibitem [\protect \citeauthoryear {%
{Thea Arbee}%
}{%
{Thea Arbee}%
}{%
{\protect \APACyear {2016}}%
}]{%
TheaArbee}
\APACinsertmetastar {%
TheaArbee}%
\begin{APACrefauthors}%
{Thea Arbee}.%
\end{APACrefauthors}%
\unskip\
\newblock
\APACrefYearMonthDay{2016}{March 5}{}.
\newblock
\APACrefbtitle {Is the superfluous a (new) modality?} {Is the superfluous a
  (new) modality?}
\newblock
\APACrefnote{Talk given at IRIT, Toulouse, at the occasion of a meeting
  celebrating the Emeritus title of Luis Fari\~{n}as del Cerro. Unfortunately
  unpublished}
\PrintBackRefs{\CurrentBib}

\bibitem [\protect \citeauthoryear {%
Valitutti%
\ \BBA {} Veale%
}{%
Valitutti%
\ \BBA {} Veale%
}{%
{\protect \APACyear {2016}}%
}]{%
Valitutti2016}
\APACinsertmetastar {%
Valitutti2016}%
\begin{APACrefauthors}%
Valitutti, A.%
\BCBT {}\ \BBA {} Veale, T.%
\end{APACrefauthors}%
\unskip\
\newblock
\APACrefYearMonthDay{2016}{}{}.
\newblock
{\BBOQ}\APACrefatitle {Infusing humor in unexpected events} {Infusing humor in
  unexpected events}.{\BBCQ}
\newblock
\BIn{} \APACrefbtitle {Proc. International Conference on Distributed, Ambient,
  and Pervasive Interactions} {Proc. international conference on distributed,
  ambient, and pervasive interactions}\ (\BPGS\ 370--379).
\PrintBackRefs{\CurrentBib}

\bibitem [\protect \citeauthoryear {%
Winslett%
}{%
Winslett%
}{%
{\protect \APACyear {1988}}%
}]{%
Winslett1988}
\APACinsertmetastar {%
Winslett1988}%
\begin{APACrefauthors}%
Winslett, M.%
\end{APACrefauthors}%
\unskip\
\newblock
\APACrefYearMonthDay{1988}{}{}.
\newblock
{\BBOQ}\APACrefatitle {Reasoning about Action Using a Possible Models Approach}
  {Reasoning about action using a possible models approach}.{\BBCQ}
\newblock
\BIn{} H\BPBI E.~Shrobe, T\BPBI M.~Mitchell\BCBL {}\ \BBA {} R\BPBI G.~Smith\
  (\BEDS), \APACrefbtitle {Proc. of the 7th National Conference on Artificial
  Intelligence, {S}t. {P}aul, {A}ug. 21-26, 1988} {Proc. of the 7th national
  conference on artificial intelligence, {S}t. {P}aul, {A}ug. 21-26, 1988}\
  (\BPGS\ 89--93).
\newblock
\APACaddressPublisher{}{{AAAI} Press / The {MIT} Press}.
\PrintBackRefs{\CurrentBib}

\bibitem [\protect \citeauthoryear {%
Zadeh%
}{%
Zadeh%
}{%
{\protect \APACyear {1978}}%
}]{%
Zad78}
\APACinsertmetastar {%
Zad78}%
\begin{APACrefauthors}%
Zadeh, L\BPBI A.%
\end{APACrefauthors}%
\unskip\
\newblock
\APACrefYearMonthDay{1978}{}{}.
\newblock
{\BBOQ}\APACrefatitle {Fuzzy sets as a basis for a theory of possibility}
  {Fuzzy sets as a basis for a theory of possibility}.{\BBCQ}
\newblock
\APACjournalVolNumPages{Fuzzy Sets and Systems}{1}{}{3--28}.
\PrintBackRefs{\CurrentBib}

\end{thebibliography}
\bibliographystyle{apacite}
\end{document}